\def\eqref#1{equation~\ref{#1}}
\def\1{\mathbbm{1}}
\DeclareMathAlphabet{\mathsfit}{\encodingdefault}{\sfdefault}{m}{sl}
\SetMathAlphabet{\mathsfit}{bold}{\encodingdefault}{\sfdefault}{bx}{n}
\newcommand{\Var}{\mathrm{Var}}
\newcommand{\transpose}{\mathsf{T}}
\newtheorem{theorem}{Theorem}
\newtheorem{lemma}{Lemma}
\newtheorem{definition}{Definition}
\newtheorem{assumption}{Assumption}
\DeclareRobustCommand*{\IEEEauthorrefmark}[1]{\raisebox{0pt}[0pt][0pt]{\textsuperscript{\footnotesize #1}}}
\def\alg{{\texttt{MFPO}}}
\def\BibTeX{{\rm D\kern-.05em{\sc i\kern-.025em b}\kern-.08em
    T\kern-.1667em\lower.7ex\hbox{E}\kern-.125emX}}
\begin{document}

\title{Momentum-Based Federated Reinforcement Learning with Interaction and Communication Efficiency
}

\author{
\IEEEauthorblockN{
Sheng Yue\IEEEauthorrefmark{1}, 
Xingyuan Hua\IEEEauthorrefmark{2}, 
Lili Chen\IEEEauthorrefmark{1}, 
Ju Ren\IEEEauthorrefmark{1}\IEEEauthorrefmark{3}\IEEEauthorrefmark{*}
\thanks{\textsuperscript{*}Corresponding author}}
\IEEEauthorblockA{
\IEEEauthorrefmark{1}Department of Computer Science and Technology, BNRist, Tsinghua University, Beijing, China \\
\IEEEauthorrefmark{2}School of Computer Science and Technology, Beijing Institute of Technology, Beijing, China \\ 
\IEEEauthorrefmark{3}Zhongguancun Laboratory, Beijing, China \\
\href{mailto:shengyue@tsinghua.edu.cn,dyh19@tsinghua.edu.cn,renju@tsinghua.edu.cn}{\texttt{\{shengyue,lilichen,renju\}@tsinghua.edu.cn}}, \href{mailto:xingyuanhua@bit.edu.cn}{\texttt{xingyuanhua@bit.edu.cn}}
}}

\maketitle

\begin{abstract}
Federated Reinforcement Learning (FRL) has garnered increasing attention recently. However, due to the intrinsic spatio-temporal non-stationarity of data distributions, the current approaches typically suffer from high interaction and communication costs. In this paper, we introduce a new FRL algorithm, named \alg{}, that utilizes momentum, importance sampling, and additional server-side adjustment to control the shift of stochastic policy gradients and enhance the efficiency of data utilization. We prove that by proper selection of momentum parameters and interaction frequency, \alg{} can achieve $\tilde{\mathcal{O}}(H N^{-1}\epsilon^{-3/2})$ and $\tilde{\mathcal{O}}(\epsilon^{-1})$ interaction and communication complexities ($N$ represents the number of agents), where the interaction complexity achieves linear speedup with the number of agents, and the communication complexity aligns the best achievable of existing first-order FL algorithms. Extensive experiments corroborate the substantial performance gains of \alg{} over existing methods on a suite of complex and high-dimensional benchmarks. 

\end{abstract}


\section{Introduction}
\label{sec:introduction}


With the rapid proliferation of Artificial Internet of Things (AIoT) applications and the increasing significance of data security, Federated Learning (FL) has emerged as a key enabler in the era of edge intelligence~\cite{mcmahan2017communication,xu2021edge,zhang2023towards}. Recently, to reconcile FL with ever-growing intelligent decision-making applications, 
there has been a surge of interest towards \emph{Federated Reinforcement Learning} (FRL), whereby distributed agents collaborate to build a decision policy with no need to share their raw trajectories~\cite{liu2019lifelong,nadiger2019federated,zhuo2020federated,fan2021fault,khodadadian2022federated}.
FRL has been deemed as a practically appealing approach to address the data hungry of Reinforcement Learning (RL)~\cite{khodadadian2022federated},
and demonstrated remarkable potential in a wide range of real-world systems, including robotics~\cite{liu2019lifelong}, autonomous driving~\cite{liang2019federated}, resource management in networking~\cite{yu2020deep}, and control of IoT devices~\cite{lim2020federated}.

However, the majority of current studies in FRL heuristically repurpose well-established supervised FL methods for the RL setting, e.g., directly combining \texttt{FedAvg} with classical \texttt{PG} or \texttt{Q-learning}~\cite{nadiger2019federated,lim2020federated,khodadadian2022federated}, neglecting a unique challenge embedded therein: \emph{the spatio-temporal non-stationarity of data distributions}. That is, in contrast to supervised FL operating on fixed datasets, FRL's intrinsic trial-and-error learning process typically necessitates each agent to explore the environment and sample new data using the current policy in each local update, causing continually varying data distributions \emph{across participating agents and training rounds}. As a result, it would inflict two major issues on the existing methods: on one hand, since interaction with real systems can be slow, expensive, or fragile, the simplistic combinations easily suffer from \emph{excessive interaction/sampling cost} during the continual environmental exploration; on the other hand, the dynamic data distributions can lead to substantially increased inter- and intra-agent shifts of  stochastic gradients, inducing \emph{brittle convergence properties and high communication complexity}. For instance, drawing upon the analysis for \texttt{FedAvg}~\cite{yang2021achieving}, we can show that the direct combination of \texttt{FedAvg} and \texttt{PG}~\cite{anwar2021multi} requires $\tilde{\mathcal{O}}(H\sigma^4_g\epsilon^{-2})$ environmental interactions and $\tilde{\mathcal{O}}(\sigma^4_g\epsilon^{-2})$ communication rounds to reach an $\epsilon$-stationary solution ($H$ and $\sigma^2_g$ represent the trajectory length and the potentially large variance of policy gradient, respectively), which can be quite expensive for resource-sensitive edge users. Of note, despite the existence of variance reduction techniques in supervised FL~\cite{johnson2013accelerating,karimireddy2020scaffold,khanduri2021stem}, the specific spatio-temporal non-stationarity of data distributions renders them inapplicable to FRL settings~\cite{fan2021fault}.

\begin{figure}[t]
    \centering
    \includegraphics[width=0.995\linewidth]{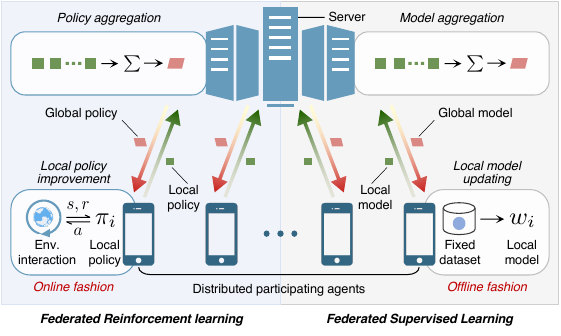}
    \vspace{-1.5em}
    \caption{Federated Reinforcement Learning vs Federated Supervised Learning.}  
    \vspace{-2em}
    \label{figure:system_model}
\end{figure}

\textbf{Contributions.} To overcome these challenges, this paper proposes  \emph{Momentum-assisted Federated Policy Optimization} (\alg{}), capable of jointly optimizing both the interaction and communication complexities. Specifically, 
we introduce a new FRL framework that utilizes momentum, importance sampling, and extra server-side adjustment to control the variates of stochastic policy gradients and improve the efficiency of data utilization. Building on this, we rigorously quantify the impacts of the inter- and intra-agent gradient errors on the performance. We prove that by proper selection of momentum parameters and interaction frequency, \alg{} can effectively counteract the gradient shifts and achieve $\tilde{\mathcal{O}}(H\sigma^2_g N^{-1}\epsilon^{-3/2})$  interaction complexity along with $\tilde{\mathcal{O}}(\sigma^2\epsilon^{-1})$ communication complexity ($N$ represents the number of agents). Notably, the interaction complexity achieves linear speedup with the number of agents, and the communication complexity recovers the best achievable across existing first-order stochastic FL algorithms. Finally, we evaluate \alg{} on a suite of complex and high-dimensional RL benchmarks, including Classic Control, MuJoCo, and image-based Atari games. The results demonstrate that \alg{} surpasses the state-of-the-art baseline methods by a significant margin, in terms of the performance and the efficiency of communication and interaction.




\section{Related Work}
\label{sec:related_work}


In recent years, several FRL algorithms have been proposed to address data-sharing constraints and facilitate safe co-training of policies~\cite{zhang2017survey,zhou2019edge,wang2023attrleaks}. Nadiger et al.~\cite{nadiger2019federated} propose an FRL approach that combines \texttt{DQN}~\cite{hasselt2010double} and \texttt{FedAvg}~\cite{mcmahan2017communication} to obtain personalized policies for individual players in the Pong game by employing the smoothing average technique. 
Lim et al.~\cite{lim2020federated} propose an FRL algorithm that combines Proximal Policy Optimization (\texttt{PPO})~\cite{schulman2017proximal} with \texttt{FedAvg}. Utilizing transfer learning, Liang et al.~\cite{liang2019federated} adapt Deep Deterministic Policy Gradient (\texttt{DDPG})~\cite{lillicrap2015continuous} for \texttt{FedAvg} to operate in autonomous driving scenarios. Cha et al.~\cite{cha2019federated} introduce a privacy-preserving variant of policy distillation, where a pre-arranged set of states and time-averaged policies is exchanged instead of raw data during training. Zhuo et al.~\cite{zhuo2020federated} propose a two-agent FRL framework in the discrete state-action spaces built on \texttt{Q-learning}~\cite{watkins1992q}, where agents share local encrypted Q-values and alternately update the global Q-network using multilayer perceptron (MLP). 
Anwar and Raychowdhury~\cite{anwar2021multi} study the adversarial attack issue in FRL via combining the policy gradient method with \texttt{FedAvg}, with the goal of training a unified policy for individual tasks. 
However, these works heuristically repurpose popular supervised FL methods for RL settings, not equipped with rigorous communication/interaction assurances. It remains a critical drawback due to the (potentially excessive) communication/interaction cost in real systems~\cite{daniel2019challenges}.

More recently, analogous to \cite{cha2019federated}, Khodadadian et al.~\cite{khodadadian2022federated} introduce an FRL algorithm by combining \texttt{FedAvg} with classical \texttt{Q-learning} and provide corresponding convergence guarantees, whereas they mainly concentrate on discrete cases (the state and action spaces are finite). Instead, this work operates in high-dimensional and continuous spaces. In a separate development, Fan et al.~\cite{fan2021fault} develop a fault-tolerant FRL algorithm, namely \texttt{FedPG-BR}, where a certain percentage (denoted by $\alpha\le0.5$) of the participating agents are subject to random system failures or adversarial attacks. \texttt{FedPG-BR} requires $\mathcal{O}(HN^{-2/3}\epsilon^{-5/3} + H\alpha^{4/3}\epsilon^{-5/3})$ interaction steps for each agent, under the assumption that the server can continually interact with the environment. In contrast, our algorithm offers a more favorable interaction complexity of $\tilde{\mathcal{O}}(HN^{-1}\epsilon^{-3/2})$ with the linear speedup and does not require any interaction between the server and the environment.


\section{Federated Reinforcement Learning}

\emph{Reinforcement Learning} (RL) is typically modeled as a \emph{Markov Decision Process} (MDP)  $\mathcal{M}\doteq\langle\mathcal{S},\mathcal{A},T,H,{r},\mu,\gamma\rangle$, consisting of state space $\mathcal{S}$, action space $\mathcal{A}$, transition dynamics $T:\mathcal{S}\times\mathcal{A}\rightarrow\mathcal{P}(\mathcal{S})$, episode horizon $H$, reward function ${r}:\mathcal{S}\times\mathcal{A}\rightarrow[0,R_{\max}]$, initial state distribution $\mu:\mathcal{S}\rightarrow[0,1]$, and  discount factor $\gamma\in(0,1]$~\cite{sutton2018reinforcement}. A (stationary stochastic) policy, $\pi_\theta(a|s)$, defines the probability of taking action $a$ at state $s$, which is parameterized by $\theta\in\mathbb{R}^d$. A trajectory, denoted as $\tau=\{s_1,a_1,\dots,s_H,a_H\}$, is a sequence of state-action pairs when rolling out $\pi$ with $M$. The objective of RL is to find a policy that can maximize the expected cumulative reward over the traversed trajectories:
\begin{align}
    \label{eq:objective}
    \max_{\theta}\mathbb{E}\Bigg[\sum^{H}_{h=1}\gamma^{h-1} r(s_h,a_h)\mid T,\mu,\pi_\theta\Bigg],
\end{align}
where the expectation is taken w.r.t. $s_1\sim\mu$, $a_h\sim\pi_\theta(\cdot|s_h)$, and $s_{h+1}\sim T(\cdot|s_h,a_h)$. Due to the intrinsic complexity (e.g., high-dimensional state-action spaces and delayed reward feedback), RL algorithms are often time-consuming and sample-inefficient.


\emph{Federated Reinforcement Learning} (FRL) 
solves Problem (\ref{eq:objective}) in a distributed fashion, where ${N}$ distributed agents federatively build a policy under the orchestration of a central server, without sharing their raw trajectories (as illustrated in \cref{figure:system_model}). The goal is to speed up policy search and improve sampling efficiency via collaboration among agents while complying with the requirement of information privacy or data confidentiality. 


\section{MFPO: Momentum-Based Federated Policy Optimization}

Denote the probability of $\tau=\{s_1,a_1,\dots,s_H,a_H\}$ under policy $\pi_\theta$ as $p(\tau|\theta)\doteq\mu(s_1)\prod^{H}_{h=1}T(s_{h+1}|s_{h},a_{h})\pi_\theta(a_{h}|s_{h})$, and the objective function as $J(\theta)\doteq-\mathbb{E}_{\tau\sim p(\cdot|\theta)}[r(\tau)]$ with $r(\tau)=\sum^H_{h=1}\gamma^{h-1}r(s_h,a_h)$ being the cumulative reward of trajectory $\tau$. Using the log-gradient trick and substituting the expression of $p(\tau|\theta)$, we can obtain the gradient of $J(\theta)$:
\begin{align}
    \nabla J(\theta)
    =-\mathbb{E}_{\tau\sim p(\cdot|\theta)}\Bigg[\bigg(\sum^H_{h=1}\nabla_\theta \pi_\theta(a_h|s_h)\bigg)r(\tau)\Bigg].
    \label{eq:policy_gradient}
\end{align}
Define $g(\theta;\tau)$ as an unbiased gradient estimator, which can be selected as widely used \texttt{REINFORCE}~\cite{williams1992simple} or \texttt{GPOMDP}~\cite{baxter2001infinite}. For example, the \texttt{REINFORCE} estimator can be expressed as $g(\theta;\tau)=\big(b-\sum^H_{h=1}r(s_h.a_h)\big)\sum^H_{h=1}\nabla_\theta\log\pi_\theta(a_h|s_h)$ with $b$ being the baseline reward.


A natural FRL solution is to integrate Policy Gradient (\texttt{PG}) directly into current supervised FL frameworks. Yet, due to the stochasticity exponential in $H$, the gradient estimates inevitably suffer from substantially high variance~\cite{agarwal2019reinforcement}. Besides, since the local policy updates will alter the agent-side distribution, $p(\cdot|\theta)$, on which $\nabla J(\theta)$ depends, the trajectory distributions across agents are spatio-temporally non-stationary. As a result, this sort of combination may cause pronounced inter- and intra-agent gradient shifts, significantly impeding learning performance.






Motivated by the recent advance of momentum-based distributed optimization~\cite{yu2019linear,khanduri2021stem}, we next introduce a novel momentum-assisted FRL algorithm, exploiting the techniques of momentum, importance sampling, and server-side adjustment, to tackle the above-mentioned problem. 
To be specific, the algorithm begins by initializing the policy parameters as $\theta^{(1)}_i=\bar{\theta}^{(1)}$ and then computes the corresponding initial directions as $\tilde{u}^{(1)}_i=(1/\tilde{D})\sum^{\tilde{D}}_{j=1}g(\theta^{(1)}_i;\tau^{(1)}_{i,j})$, with $\tilde{D}$ the number of trajectories generated from the initial policy. Subsequently, it alternates between local and global phases as follows.
	

\textbf{\emph{1) Local phase:}} In step $t$, each agent samples $D$ trajectories (denoted as $\tau^{(t)}_{i,j}$) via interacting with the environment using policy $\theta^{(t)}_i$. Then, it locally computes its updating direction by
\begin{align}
    \tilde{u}^{(t)}_i &=  \nu^{(t)}\bigg(\tilde{u}^{(t-1)}_i -  \frac{1}{D}\sum\nolimits^{D}_{j=1}w^{(t)}_{i,j}\cdot g(\theta^{(t-1)}_i;\tau^{(t)}_{i,j})\bigg)\nonumber\\
    &+\frac{1}{D}\sum\nolimits^{D}_{j=1}g(\theta^{(t)}_i;\tau^{(t)}_{i,j}),
    \label{eq:local_direction}
\end{align}
with $\nu^{(t)}\in[0,1]$ being the momentum parameter and $w^{(t)}_{i,j}$ the importance weight, computed as
\begin{align}
    w^{(t)}_{i,j}=w(\theta^{(t)}_i,\theta^{(t-1)}_i;\tau^{(t)}_{i,j})=\frac{\prod^{H}_{h=1}\pi^{(t-1)}_i(a_{h,i}|s_{h,i})}{\prod^{H}_{h=1}\pi^{(t)}_i(a_{h,i}|s_{h,i})}.
\end{align}
If $t~\mathrm{mod}~{K} \neq 0$, agents update their policy parameters locally:
\begin{align}
    \label{eq:local_update}
    \theta^{(t+1)}_i = \theta^{(t)}_i - \alpha^{(t)}\tilde{u}^{(t)}_i,\quad\forall i\in[{N}],
\end{align}
with $K$ the number of local steps and $\alpha^{(t)}$ the stepsize.

\textbf{\emph{2) Global phase:}} If $t~\mathrm{mod}~{K} = 0$, agents upload their local parameters and directions to the server for aggregation:
\begin{align}
    \bar{u}^{(t)} = \frac{1}{N}\sum\nolimits^{N}_{i=1}\tilde{u}^{(t)}_i, \quad
    \bar{\theta}^{(t)} = \frac{1}{N}\sum\nolimits^{N}_{i=1}\theta^{(t)}_i.
    \label{eq:global_aggregation}
\end{align}
The server carries out server-side adjustment as follows:
\begin{align}
    \bar{\theta}^{(t+1)}=\bar{\theta}^{(t)} - \alpha^{(t)}\bar{u}^{(t)}.
    \label{eq:global_momentum}
\end{align}
Then, it distributes the parameter and direction to all agents:
\begin{align}
    \label{eq:synchronazation}
    \theta^{(t+1)}_i = \bar{\theta}^{(t+1)}, \quad \tilde{u}^{(t)}_i=\bar{u}^{(t)},
\end{align}
and starts the next round.

We term our algorithm \emph{Momentum-assisted Federated Policy Optimization} (\alg{}) and outline the pseudocode in Alg.~\ref{alg:mfpo}. Intuitively, the momentum term along with importance sampling can keep track of past gradients in an off-policy manner, capable of improving sample efficiency while reducing the effect of fluctuations in the intra-agent gradient estimates~\cite{yu2019linear}. On the other hand, the server-side adjustment enables the global policy to continue moving along the dominant dimension and hence alleviating the inter-agent gradient shift. Next, we show how to select momentum parameters and interaction frequency to jointly optimize interaction and communication complexities.

\begin{algorithm}[ht]
    \LinesNumbered
    Initialize policy parameters and updating directions\;
    \For{$t=1$ \KwTo $T$}{
        \For{$i=1$ \KwTo $N$}{
        \tcp{Local phase}
            Agent $i$ rolls out local policy with environment, generates $D$ trajectories, and computes local direction by \cref{eq:local_direction}\;
            \If{$t~\mathrm{mod}~{K} \neq 0$}{
                Agent $i$ updates local policy by \cref{eq:local_update}\;
            }
        }
        \If{$t~\mathrm{mod}~{K} = 0$}{
            \tcp{Global phase}
            Agents upload local policies and directions\;
            Server updates global policy by \cref{eq:global_aggregation,eq:global_momentum}\;
            Server distributes global policy to all agents\;
        }
    }
    \caption{\alg{}}
    \label{alg:mfpo}
\end{algorithm}


\section{Theoritical Analysis}
\label{sec:theory}

In this section, we analyze the performance of \alg{}. We first introduce the necessary assumptions and then present our main result, followed by detailed proofs. 

\subsection{Notations and Assumptions}

For $\tau~\mathrm{mod}~K\neq 0$ , we define auxiliary variables as follows:
\begin{align}
    \label{eq:auxiliary}
    \bar{u}^{(\tau)} \doteq \frac{1}{N}\sum\nolimits^{N}_{i=1}\tilde{u}^{(\tau)}_i,\quad
    \bar{\theta}^{(\tau)} \doteq \frac{1}{N}\sum\nolimits^{N}_{i=1}\theta^{(\tau)}_i,
\end{align}
and for each $i\in[N]$, we define $\tilde{u}^{(0)}_i\doteq0$. When clear from the context, we use $\alpha_t$ and $\nu_t$ instead of $\alpha^{(t)}$ and $\nu^{(t)}$ for conciseness. We denote $t_q\doteq qK$ with $q\in\{0,\dots,M\}$ the index of communication rounds, and denote $[n] = \{1,2,\cdots,n\}$ for any $n\in\mathbb{N}$. In addition, we represent $\tilde{L}\doteq\max\{\tilde{L}_g,L\}$ where $L$ and $\tilde{L}_g$ are defined in \cref{lem:successive_difference,lem:error_contraction} respectively.

Due to the non-concavity of Problem (\ref{eq:objective})~\cite{agarwal2019reinforcement}, it is generally not feasible to measure the optimality by function values. Instead, the convergence of non-convex problems is typically characterized via finding an $\epsilon$-first-order stationary point ($\epsilon$-FOSP), defined as follows.
\begin{definition}
A solution $\theta\in\mathbb{R}^{d}$ is called an $\epsilon$-first-order stationary point ($\epsilon$-FOSP) of Problem (\ref{eq:objective}), if $\Vert\nabla J(\theta)\Vert^2\le\epsilon$. 
\end{definition}

We impose two commonly used assumptions in analyzing policy gradient methods as follows~\cite{papini2018stochastic,xu2020improved,fan2021fault}. 

\begin{assumption}
\label{asmp:derivative_bounds}
There exist $\beta_1,\beta_2>0$ such that for all $s\in\mathcal{S},a\in\mathcal{A}$, the log-density of the policy function satisfies
\begin{align}
    \|\nabla_\theta\log\pi_\theta(a|s)\|\le\beta_1,\quad \|\nabla^2_\theta\log\pi_\theta(a|s)\|\le\beta_2.
\end{align}
\end{assumption}

\begin{assumption}
\label{asmp:bounded_variance}
There exists $\sigma_g,\sigma_w>0$ such that the following fact holds:
\begin{align}
    &\mathbb{E}_{\tau\sim p(\cdot|\theta)}\big[\|g(\theta;\tau)-\nabla J(\theta)\|^2\big]\le \sigma_g^2,\quad\forall \theta\in\mathbb{R}^d,\\
    \label{eq:bounded_variance_w}
    &\Var(w(\theta,\theta';\tau))\le\sigma^2_w,\quad\forall \theta,\theta'\in\mathbb{R}^d,\tau\sim p(\cdot|\theta),
\end{align}
where $w(\theta,\theta';\tau)=p(\tau|\theta')/p(\tau|\theta)$ is the importance weight used in the algorithm.
\end{assumption}

Assumptions \ref{asmp:derivative_bounds} and \ref{asmp:bounded_variance} bound the gradient of the policy log-density and the variance of the gradient estimator, respectively. We suppose Assumptions \ref{asmp:derivative_bounds} and \ref{asmp:bounded_variance} and $T=MK$ ($M\in\mathbb{N}$) hold throughout this section. 

\subsection{Main Results}

We define the \emph{communication complexity} as the total number of communication rounds necessary for the algorithm to reach an $\epsilon$-FOSP, and the \emph{interaction complexity} as the total number of actions that each agent requires taking in the environment to achieve the $\epsilon$-FOSP. We define $c_\alpha$, $c_\nu$, and $c_t$ as follows:
\begin{align}
    c_t\doteq\max\left\{\frac{c^3_\nu c^3_\alpha}{2^{12}K^3{\tilde{L}}^3},2^{12}K^3D^2N^2\sigma^2_g - \sigma^2_g t,2\sigma^2_g\right\}\nonumber\\
    c_\nu\doteq\frac{{\tilde{L}}^2}{24 K(DN)^2}+\frac{64{\tilde{L}}^2}{DN},~c_\alpha\doteq\frac{(DN\sigma_g)^{2/3}}{{\tilde{L}}},
    \label{eq:parameters}
\end{align}
where positive integers $N$, $K$, and $D$ represent the numbers of agents, local updates, and trajectories required in each local update, respectively. Our main result is presented below.

\begin{theorem}
\label{thm:convergence}
Suppose the stepsizes and momentum parameters are selected as $\alpha_t = c_\alpha/(c_t+\sigma^2_g t)^{1/3}$ and $\nu_{t+1}=1 - c_\nu \alpha^2_{t}$. For any $\lambda\in[0,1]$, if $D=\mathcal{O}((T/N^2)^{1/2-\lambda/2})$, $\tilde{D}=DK$, and $K=\mathcal{O}((T/N^2)^{\lambda/3})$, \alg{} finds an $\epsilon$-FOSP after at most $\tilde{\mathcal{O}}(\epsilon^{-1})$ communication rounds and $\tilde{\mathcal{O}}(HN^{-1}\epsilon^{-3/2})$ environmental interactions.
\end{theorem}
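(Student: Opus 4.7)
The plan is to mirror the classical STORM/Hybrid-SGD analysis, but enriched with a consensus-error track to handle the $K$-step local updates, and with the off-policy importance-weight variance to handle the spatio-temporal non-stationarity unique to FRL. Concretely, I would introduce the virtual averaged iterate $\bar{\theta}^{(t)}$ and the momentum estimate $\bar{u}^{(t)}$ defined in \cref{eq:auxiliary}, and design a potential function of the form
\begin{align*}
\Phi_t \doteq J(\bar{\theta}^{(t)}) - J^\star + \tfrac{A}{\alpha_t}\mathbb{E}\bigl\|\bar{u}^{(t)}-\nabla J(\bar{\theta}^{(t)})\bigr\|^2 + \tfrac{B}{N}\sum_{i=1}^N \mathbb{E}\bigl\|\theta_i^{(t)}-\bar{\theta}^{(t)}\bigr\|^2,
\end{align*}
where $A,B>0$ are constants to be tuned. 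The goal is to establish a one-step inequality of the form $\Phi_{t+1}\le \Phi_t - c\,\alpha_t\,\mathbb{E}\|\nabla J(\bar{\theta}^{(t)})\|^2 + R_t$ with a residual $R_t$ that is summable under the prescribed $(\alpha_t,\nu_t,D,K)$, which on telescoping yields $\min_t \mathbb{E}\|\nabla J(\bar{\theta}^{(t)})\|^2\le \tilde{\mathcal{O}}(T^{-2/3})$ when $\lambda=0$, and more generally a trade-off parameterised by $\lambda$.

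First I would derive the $L$-smoothness of $J$ from \cref{asmp:derivative_bounds} in the standard way (the constant $L$ depends on $H$, $\beta_1$, $\beta_2$ and $R_{\max}$), invoke the descent lemma for $\bar{\theta}^{(t+1)}=\bar{\theta}^{(t)}-\alpha_t\bar{u}^{(t)}$, and split $\langle\nabla J(\bar{\theta}^{(t)}),\bar{u}^{(t)}\rangle$ using Young's inequality to isolate the momentum error $\mathbb{E}\|\bar{u}^{(t)}-\nabla J(\bar{\theta}^{(t)})\|^2$. Next I would establish the momentum-error recursion guaranteed by \cref{lem:error_contraction}: using $\nu_{t+1}=1-c_\nu\alpha_t^2$ and the importance-weight bound in \cref{eq:bounded_variance_w}, the update \cref{eq:local_direction} yields a STORM-type contraction
\begin{align*}
\mathbb{E}\bigl\|\bar{u}^{(t+1)}-\nabla J(\bar{\theta}^{(t+1)})\bigr\|^2 \le (1-c_\nu\alpha_t^2)\mathbb{E}\bigl\|\bar{u}^{(t)}-\nabla J(\bar{\theta}^{(t)})\bigr\|^2 + \tfrac{c_1 \alpha_t^2 \sigma_g^2}{DN} + c_2\,\text{(consensus)} + c_3\,(1-\nu_{t+1})^2\tfrac{\sigma_g^2}{DN},
\end{align*}
where the $1/(DN)$ variance reduction stems from averaging $D$ independent trajectories across $N$ agents inside the global phase. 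In parallel, a standard local-drift argument using \cref{lem:successive_difference} bounds the consensus error by $\mathcal{O}(K^2\alpha_t^2)$ times the gradient/momentum norms, which is where the choice $K=\mathcal{O}((T/N^2)^{\lambda/3})$ enters to keep the drift term under control between synchronisations.

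With these three ingredients in hand, I would plug everything into $\Phi_t$, choose $A,B$ so that the negative $-c\alpha_t\|\nabla J\|^2$ term dominates all cross-terms (this is where the specific definitions of $c_\nu$, $c_\alpha$ in \cref{eq:parameters} become sharp), and telescope from $t=1$ to $T$. The initialisation batch $\tilde{D}=DK$ ensures $\Phi_1=\tilde{\mathcal{O}}(1)$. Summing the resulting inequality and using $\alpha_t=\Theta((c_t+\sigma_g^2 t)^{-1/3})$ produces
\begin{align*}
\tfrac{1}{T}\sum_{t=1}^T \mathbb{E}\bigl\|\nabla J(\bar{\theta}^{(t)})\bigr\|^2 \le \tilde{\mathcal{O}}\!\left(\bigl(NT\bigr)^{-2/3}D^{-2/3}\right) + \tilde{\mathcal{O}}(K\text{-drift terms}).
\end{align*}
Finally I would back-solve: requiring the right-hand side to be $\le\epsilon$ and substituting $D=\mathcal{O}((T/N^2)^{1/2-\lambda/2})$ and $K=\mathcal{O}((T/N^2)^{\lambda/3})$ determines $T=\tilde{\mathcal{O}}(N^{-(2-2\lambda)/(3-\lambda)}\epsilon^{-3/(3-\lambda)})$ (up to logs), from which a direct count gives the total per-agent interactions $TDH=\tilde{\mathcal{O}}(HN^{-1}\epsilon^{-3/2})$ and communication rounds $T/K=\tilde{\mathcal{O}}(\epsilon^{-1})$, uniformly in $\lambda\in[0,1]$.

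The hard part, in my view, will be tightening the momentum-error contraction in the presence of importance sampling: the weight $w^{(t)}_{i,j}$ is trajectory-dependent and couples $\theta^{(t-1)}_i$ with $\theta^{(t)}_i$, so conditional unbiasedness and the second-moment bound of $\bar{u}^{(t)}-\nabla J(\bar{\theta}^{(t)})$ are delicate, especially inside a local phase where $\theta^{(t)}_i\neq\bar{\theta}^{(t)}$. Controlling these cross terms requires combining \cref{asmp:bounded_variance} with a careful successive-difference bound $\|\theta^{(t)}_i-\theta^{(t-1)}_i\|\le \alpha_{t-1}\|\tilde{u}^{(t-1)}_i\|$ (provided by \cref{lem:successive_difference}) and re-expanding the importance weights via a first-order Taylor argument in $\theta$; this coupling is precisely what the definition of $c_t$ in \cref{eq:parameters} is tuned to absorb, and getting that bookkeeping right is where the bulk of the technical work will lie.
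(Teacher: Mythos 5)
Your strategy is essentially the paper's own, just phrased in Lyapunov language: the paper proves \cref{thm:gradient_bound} via a descent lemma (\cref{lem:successive_difference}), a STORM-type contraction for the gradient error under importance sampling (\cref{lem:error_contraction}), and drift/consensus recursions (\cref{lem:error_accumulation,lem:gradient_consensus_error,lem:accumulated_grad_consensus_error}), then telescopes the quantity $J(\bar{\theta}^{(t)})+\frac{DN}{64\tilde{L}^2\alpha_{t-1}}\mathbb{E}\big[\|\bar{\varepsilon}^{(t)}\|^2\big]$ (see \cref{lem:gradient_error_bound} and \cref{eq:eq33}), which is exactly your potential $\Phi_t$ with $A=DN/(64\tilde{L}^2)$. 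Two bookkeeping differences: the paper measures the error against the average of local gradients, $\bar{\varepsilon}^{(t)}=\bar{u}^{(t)}-(1/N)\sum_{i}\nabla J(\theta^{(t)}_i)$, rather than against $\nabla J(\bar{\theta}^{(t)})$ --- this keeps the conditional-unbiasedness step clean, since $w^{(t)}_{i,j}g(\theta^{(t-1)}_i;\tau^{(t)}_{i,j})$ is unbiased for $\nabla J(\theta^{(t-1)}_i)$ and not for any averaged-iterate gradient, with the mismatch absorbed by the drift term already present in \cref{lem:successive_difference}; and the consensus error is accumulated over each round (\cref{lem:accumulated_grad_consensus_error}) instead of being carried as a third term in the potential. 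Neither difference changes the substance; you have correctly identified both the architecture and the genuinely delicate step.

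The gap is in your final back-solve, which is internally inconsistent. With \cref{eq:eq31} and your substitutions, all three terms scale identically: $DN=T^{1/2-\lambda/2}N^{\lambda}$ gives $(DNT)^{-2/3}=T^{\lambda/3-1}N^{-2\lambda/3}$, and $DK^2/(\tilde{D}T)=K/T=T^{\lambda/3-1}N^{-2\lambda/3}$. Requiring this to be at most $\epsilon$ yields $T=\tilde{\mathcal{O}}\big(\epsilon^{-3/(3-\lambda)}N^{-2\lambda/(3-\lambda)}\big)$, not your $T=\tilde{\mathcal{O}}\big(N^{-(2-2\lambda)/(3-\lambda)}\epsilon^{-3/(3-\lambda)}\big)$. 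The discrepancy is not cosmetic: plugging your $T$ into the direct count gives $T/K=\tilde{\mathcal{O}}\big(N^{(4\lambda-2)/3}\epsilon^{-1}\big)$ and $TDH=\tilde{\mathcal{O}}\big(HN^{2\lambda-2}\epsilon^{-3/2}\big)$, which are not uniform in $\lambda$ --- at $\lambda=1$ the linear speedup vanishes, and at $\lambda=0$ you would claim an impossible $N^{-2}$ speedup. With the corrected $T$, the count gives $T/K=T^{(3-\lambda)/3}N^{2\lambda/3}=\tilde{\mathcal{O}}(\epsilon^{-1})$ and $TDH=HT^{(3-\lambda)/2}N^{\lambda-1}=\tilde{\mathcal{O}}(HN^{-1}\epsilon^{-3/2})$ for every $\lambda\in[0,1]$, as the theorem states. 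A related slip: your aside that the rate is $\tilde{\mathcal{O}}(T^{-2/3})$ when $\lambda=0$ has the endpoints swapped; at $\lambda=0$ one has $K=1$ and $D\sim(T/N^2)^{1/2}$, so the rate is $\tilde{\mathcal{O}}(T^{-1})$, while the $T^{-2/3}$-type rate appears at $\lambda=1$ where $D=\mathcal{O}(1)$.
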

\begin{proof}
The result can be obtained by substituting the expressions of $K$, $D$ and $\tilde{D}$ in  \cref{eq:eq31}. We omit it for brevity.
\end{proof}

\noindent\textbf{\emph{Remarks.}} \cref{thm:convergence} indicates that \alg{} achieves $\tilde{\mathcal{O}}(\epsilon^{-1})$ and $\tilde{\mathcal{O}}(HN^{-1}\epsilon^{-3/2})$ communication and interaction complexities by appropriate selection of the momentum parameters and the interaction frequency. The communication complexity recovers the best achievable of existing first-order FL algorithms~\cite{drori2020complexity,khanduri2021stem}. The interaction complexity exhibits linear speedup with the number of agents, making it superior to current FRL methods~\cite{fan2021fault}. It implies that \alg{} can effectively cope with the gradient shifts and the interaction cost caused by the spatio-temporal non-stationary data distributions. In addition, \cref{thm:convergence} reveals a tradeoff between the local updates, $K$, and the required trajectories per step, $D$, characterized by $\lambda\in[0,1]$. This means with a large number of local updates, the required trajectories per step can be set relatively small, and vice versa. In practical terms, this flexibility in adjusting $K$ and $D$ allows \alg{} to adapt to different scenarios and requirements.

\subsection{Detailed Proofs}

In this subsection, we detail the proof for \cref{thm:convergence}. We begin by bounding the \emph{successive difference} of the objective function in the following lemma.

\begin{lemma}
\label{lem:successive_difference}
For $t\in(t_q,t_{q+1}]$, the following fact holds true:
\begin{align}
    \mathbb{E}\big[J(\bar{\theta}^{(t+1)})\big]  &\le \mathbb{E}\big[J(\bar{\theta}^{(t)})\big] - \frac{\alpha_t}{2}\mathbb{E}\big[\|\nabla J(\bar{\theta}^{(t)}) \|^2\big] \nonumber\\
    &- \frac{\alpha_t - \alpha^2_t L}{2}\mathbb{E}\big[\|\bar{u}^{(t)}\|^2\big] + \alpha_t\mathbb{E}\big[\|\bar{\varepsilon}^{(t)}\|^2\big]\nonumber\\
    &+\frac{\alpha_tL^2}{N}\sum\nolimits^N_{i=1}\mathbb{E}\big[\|\theta^{(t)}_i-\bar{\theta}^{(t)}\|^2\big],
    \label{eq:successive_difference}
\end{align}
with $\bar{\varepsilon}^{(t)} \doteq \bar{u}^{(t)}-(1/N)\sum^{N}_{i=1}\nabla J(\theta^{(t)}_i)$ being the gradient error and $L\doteq HR_{\max} (H\beta^2_1+\beta_2)/(1-\gamma)$.
\end{lemma}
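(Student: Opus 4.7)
The plan is to establish a standard descent inequality for a smooth non-convex objective, adapted to the local-plus-server structure of \alg{}. The starting point is a Lipschitz-gradient (smoothness) bound for $J$ with constant $L=HR_{\max}(H\beta_1^2+\beta_2)/(1-\gamma)$; this constant is precisely the one appearing in the lemma statement, and it is the well-known policy-gradient smoothness constant that follows from \cref{asmp:derivative_bounds} by differentiating $\nabla J(\theta) = -\mathbb{E}_{\tau}[(\sum_h\nabla\log\pi_\theta(a_h|s_h))r(\tau)]$ twice, using $|r(\tau)|\le HR_{\max}/(1-\gamma)$ and bounding $\|\nabla^2\log p(\tau|\theta)\|$ via $H\beta_1^2+\beta_2$. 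I would quote this as a standard computation (e.g., from \cite{papini2018stochastic,xu2020improved}) rather than redo it.

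Next I would observe the key one-line fact that, in both branches of the algorithm, $\bar{\theta}^{(t+1)}-\bar{\theta}^{(t)}=-\alpha_t\bar{u}^{(t)}$: when $t\bmod K\neq 0$ this follows by averaging \cref{eq:local_update} over $i$ together with the definition \cref{eq:auxiliary}, and when $t\bmod K=0$ it is exactly the server update \cref{eq:global_momentum}. Applying $L$-smoothness of $J$ to this increment yields
\begin{align*}
J(\bar{\theta}^{(t+1)}) \le J(\bar{\theta}^{(t)}) - \alpha_t\langle\nabla J(\bar{\theta}^{(t)}),\bar{u}^{(t)}\rangle + \tfrac{\alpha_t^2 L}{2}\|\bar{u}^{(t)}\|^2.
\end{align*}

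The decisive step is to split the cross term using the polarization identity $-\langle a,b\rangle=\tfrac12\|a-b\|^2-\tfrac12\|a\|^2-\tfrac12\|b\|^2$ with $a=\nabla J(\bar{\theta}^{(t)})$ and $b=\bar{u}^{(t)}$. This produces exactly the $-\tfrac{\alpha_t}{2}\|\nabla J(\bar{\theta}^{(t)})\|^2$ and $-\tfrac{\alpha_t-\alpha_t^2 L}{2}\|\bar{u}^{(t)}\|^2$ terms in \cref{eq:successive_difference}, plus a residual $\tfrac{\alpha_t}{2}\|\nabla J(\bar{\theta}^{(t)})-\bar{u}^{(t)}\|^2$. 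Writing $\nabla J(\bar{\theta}^{(t)})-\bar{u}^{(t)} = \bigl[\nabla J(\bar{\theta}^{(t)})-\tfrac{1}{N}\sum_i\nabla J(\theta_i^{(t)})\bigr] - \bar{\varepsilon}^{(t)}$ and applying the inequality $\|x-y\|^2\le 2\|x\|^2+2\|y\|^2$ with Jensen's inequality and the $L$-smoothness $\|\nabla J(\bar{\theta}^{(t)})-\nabla J(\theta_i^{(t)})\|\le L\|\bar{\theta}^{(t)}-\theta_i^{(t)}\|$ gives
\begin{align*}
\tfrac{\alpha_t}{2}\|\nabla J(\bar{\theta}^{(t)})-\bar{u}^{(t)}\|^2 \le \alpha_t\|\bar{\varepsilon}^{(t)}\|^2 + \tfrac{\alpha_t L^2}{N}\sum\nolimits_{i=1}^{N}\|\theta_i^{(t)}-\bar{\theta}^{(t)}\|^2.
\end{align*}
Taking expectation then yields \cref{eq:successive_difference}.

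The only genuinely delicate point is the verification that the averaged iterate evolves as $-\alpha_t\bar{u}^{(t)}$ across both the local and global branches, since during local steps the server does not actually exist and the averaged ``iterate'' is an auxiliary construction defined in \cref{eq:auxiliary}; once that bookkeeping is established, the rest reduces to the smoothness inequality plus the polarization and Young-type manipulations described above, each of which is a routine one-line estimate. I would expect no technical obstacle beyond carefully matching constants so that the $L^2$ factor in the final term is exactly $L^2$ (not $2L^2$), which is ensured by keeping the $\tfrac12$ from polarization paired with the factor $2$ from the triangle-type split.
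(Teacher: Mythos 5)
Your proposal is correct and takes essentially the same route as the paper's proof: apply $L$-smoothness to the increment $\bar{\theta}^{(t+1)}-\bar{\theta}^{(t)}=-\alpha_t\bar{u}^{(t)}$, split the cross term via the polarization identity $2\theta_1^\transpose\theta_2=\|\theta_1\|^2+\|\theta_2\|^2-\|\theta_1-\theta_2\|^2$, and bound the residual $\frac{\alpha_t}{2}\|\bar{u}^{(t)}-\nabla J(\bar{\theta}^{(t)})\|^2$ by adding and subtracting $(1/N)\sum_{i}\nabla J(\theta^{(t)}_i)$, then using the Young-type inequality together with smoothness, exactly as in the paper. Your explicit check that the averaged iterate obeys the same update rule in both the local and global branches is a bookkeeping point the paper leaves implicit, but the argument is otherwise identical.
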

\begin{proof}
Built upon \cref{asmp:derivative_bounds} and \cite[Proposition 5.2]{xu2020improved}, $J(\theta)$ is $L$-smooth, which implies 
\begin{align}
    \label{eq:smoothness}
    J(\theta_1) \le\;& J(\theta_2) + \nabla J(\theta_2)^\transpose (\theta_1 - \theta_2)+ \frac{L}{2}\|\theta_1 - \theta_2\|^2.
\end{align}
Based on the $L$-smooth and \cref{eq:local_update,eq:auxiliary}, we have
\begin{align}
    J(\bar{\theta}^{(t+1)}) 
    =\;& J(\bar{\theta}^{(t)}) -\alpha_t \nabla J(\bar{\theta}^{(t)})^\transpose \bar{u}^{(t)}+ \frac{\alpha^2_t L}{2}\|\bar{u}^{(t)}\|^2\nonumber\\
    =\;& J(\bar{\theta}^{(t)}) - \frac{\alpha_t}{2}\mathbb{E}\big[\|\nabla J(\bar{\theta}^{(t)}) \|^2\big]- \frac{\alpha_t - \alpha^2_t L}{2}\nonumber\\
    &\cdot\mathbb{E}\big[\|\bar{u}^{(t)}\|^2\big]+\frac{\alpha_t}{2} \underbrace{\|\bar{u}^{(t)}-\nabla J(\bar{\theta}^{(t)})\|^2}_{(a)},
    \label{eq:eq3}
\end{align}
where second equality is derived by adding and subtracting $\alpha_t\|\bar{u}^{(t)}\|^2$ and utilizing $2\theta_1^\transpose \theta_2=\|\theta_1\|^2 + \|\theta_2\|^2 - \|\theta_1-\theta_2\|^2$. Regarding $(a)$, we have
\begin{align}
    (a) &= \bigg\|\bar{u}^{(t)}-\sum^N_{i=1}\frac{\nabla J(\theta^{(t)}_i)}{N}+\sum^N_{i=1}\frac{\nabla J(\theta^{(t)}_i)}{N}-\nabla J(\bar{\theta}^{(t)})\bigg\|^2\nonumber\\
    &\le 2\|\bar{\varepsilon}^{(t)}\|^2+\frac{2}{N}\sum^N_{i=1}\big\|\nabla J(\theta^{(t)}_i)-\nabla J(\bar{\theta}^{(t)})\big\|^2\label{eq:eq34}\\
    &\le 2\|\bar{\varepsilon}^{(t)}\|^2+\frac{2L^2}{N}\sum^N_{i=1}\|\theta^{(t)}_i-\bar{\theta}^{(t)}\|^2,\tag{the smoothness}
\end{align}
where derived \cref{eq:eq34} using the following relationship:
\begin{align}
    \label{eq:squared_sum}
    \|\theta_1+\theta_2+\cdots+\theta_n\|^2\le n\|\theta_1\|^2 + \cdots + n\|\theta_n\|^2.
\end{align}
Plugging $(a)$ in \cref{eq:eq3} and taking expectations on both sides yield the result.
\end{proof}

Next, we bound the last term of \cref{eq:successive_difference} in \cref{lem:error_accumulation}.

\begin{lemma}
    \label{lem:error_accumulation}
    For $t\in(t_q,t_{q+1}]$ and $i\in[N]$, we have
    \begin{align}
        \mathbb{E}\big[\|\theta^{(t)}_i - \bar{\theta}^{(t)}\|^2\big]\le\left(K-1\right) \sum^{t}_{{\tau}=t_{q}+1}\alpha^2_{{\tau}}\cdot\mathbb{E}\big[\|\tilde{u}^{({\tau})}_i-\bar{u}^{({\tau})}\|^2\big].
        \label{eq:error_accumulation}
    \end{align}
\end{lemma}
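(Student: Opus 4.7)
The plan is to unroll the local update recursion \cref{eq:local_update} from the most recent synchronization point $t_q$ up to step $t$, and then apply Jensen's inequality (or equivalently Cauchy--Schwarz). The key structural fact I want to exploit is that immediately after the global step at $t_q$, the synchronization rule \cref{eq:synchronazation} forces $\theta^{(t_q+1)}_i = \bar{\theta}^{(t_q+1)}$ for every $i\in[N]$. Hence both $\theta^{(t)}_i$ and $\bar{\theta}^{(t)}$ start from the same common point $\bar{\theta}^{(t_q+1)}$ at the beginning of the local phase, and any disagreement at time $t$ must come entirely from the intervening local updates.

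Concretely, I would first telescope \cref{eq:local_update} to write
\begin{align*}
\theta^{(t)}_i = \bar{\theta}^{(t_q+1)} - \sum_{\tau=t_q+1}^{t-1}\alpha_\tau \tilde{u}^{(\tau)}_i,
\end{align*}
and average this over $i\in[N]$, using the definition $\bar{u}^{(\tau)} = (1/N)\sum_i \tilde{u}^{(\tau)}_i$ from \cref{eq:auxiliary}, to obtain the analogous expression for $\bar{\theta}^{(t)}$. Subtracting the two gives the clean representation
\begin{align*}
\theta^{(t)}_i - \bar{\theta}^{(t)} = -\sum_{\tau=t_q+1}^{t-1}\alpha_\tau\bigl(\tilde{u}^{(\tau)}_i - \bar{u}^{(\tau)}\bigr),
\end{align*}
which automatically vanishes when $t=t_q+1$, as it should.

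Next I would apply the Cauchy--Schwarz-type bound \cref{eq:squared_sum} (with $n = t-t_q-1$ summands) to control the squared norm:
\begin{align*}
\|\theta^{(t)}_i - \bar{\theta}^{(t)}\|^2 \le (t-t_q-1)\sum_{\tau=t_q+1}^{t-1}\alpha^2_\tau\,\|\tilde{u}^{(\tau)}_i - \bar{u}^{(\tau)}\|^2.
\end{align*}
Because $t\in(t_q,t_{q+1}]$ and $t_{q+1}-t_q = K$, the prefactor satisfies $t-t_q-1\le K-1$; extending the summation index up to $\tau=t$ only adds a non-negative term, which matches the form in the lemma statement. Taking expectations on both sides then yields \cref{eq:error_accumulation}.

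There is no real technical obstacle here; the lemma is essentially a bookkeeping result. The only point that requires a bit of care is the boundary condition at the synchronization step, that is, correctly identifying that the local drift is measured from $\bar{\theta}^{(t_q+1)}$ rather than from $\bar{\theta}^{(t_q)}$. Once that is pinned down, the argument is a one-line telescoping identity followed by Jensen's inequality.
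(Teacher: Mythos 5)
Your proposal is correct and follows essentially the same route as the paper's proof: both write $\theta^{(t)}_i - \bar{\theta}^{(t)} = -\sum_{\tau=t_q+1}^{t-1}\alpha_\tau(\tilde{u}^{(\tau)}_i - \bar{u}^{(\tau)})$ using the synchronization condition $\theta^{(t_q+1)}_i = \bar{\theta}^{(t_q+1)}$, then apply the squared-sum inequality \cref{eq:squared_sum} with at most $K-1$ summands (from $t_{q+1}-t_q=K$) before taking expectations. Your explicit telescoping and averaging over $i$, together with noting the sharper prefactor $t-t_q-1\le K-1$ and the harmless extension of the sum to $\tau=t$, merely spells out steps the paper leaves implicit.
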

\begin{proof}
    When $t=t_q+1$, $\mathbb{E}\big[\|\theta^{(t_q+1)}_i - \bar{\theta}^{(t_q+1)}\|^2\big] = 0$ holds due to \cref{eq:synchronazation}.
    When $t\in(t_q+1,t_{q+1}]$, it follows
    \begin{align}
        \mathbb{E}\big[\|\theta^{(t)}_i - \bar{\theta}^{(t)}\|^2\big]
        =\mathbb{E}\Bigg[\bigg\|\sum^{t-1}_{{\tau}=t_q+1} \alpha_{{\tau}}\Big(\tilde{u}^{({\tau})}_i -  \bar{u}^{({\tau})}\Big)\bigg\|^2\Bigg]\tag{from \cref{eq:local_update} and $\theta^{(t_q+1)}_i = \bar{\theta}^{(t_q+1)}$}\\
        \label{eq:eq2}
        \le (K-1)\sum^{t-1}_{{\tau}=t_q+1}\alpha^2_{{\tau}}\cdot\mathbb{E}\big[\|\tilde{u}^{({\tau})}_i-\bar{u}^{({\tau})}\|^2\big],
    \end{align}
    where the last inequality is derived via $t_{q+1}-t_q=K$ and \cref{eq:squared_sum}. Thus, we complete the proof.
\end{proof}

Recall the definition of $\bar{\varepsilon}^{(t)}$ in \cref{lem:successive_difference}. \cref{lem:error_accumulation} characterizes the error accumulation in the interates of Alg.~\ref{alg:mfpo}. Substituting \cref{eq:error_accumulation} in \cref{eq:successive_difference}, we obtain
\begin{align}
	\mathbb{E}\big[J(\bar{\theta}^{(t+1)})\big] &\le \mathbb{E}\big[J(\bar{\theta}^{(t)})\big] - \frac{\alpha_t}{2}\mathbb{E}\big[\|\nabla J(\bar{\theta}^{(t)}) \|^2\big] \nonumber\\
	&- \frac{\alpha_t - \alpha^2_t L}{2}\mathbb{E}\big[\|\bar{u}^{(t)}\|^2\big] + \alpha_t\underbrace{\mathbb{E}\big[\|\bar{\varepsilon}^{(t)}\|^2\big]}_{\mathclap{\textrm{Gradient error}}}\nonumber\\
	&+ \frac{(K-1)L^2\alpha_t}{N}\sum\nolimits^{t-1}_{{\tau}=t_{q}+1}\alpha^2_{{\tau}}\nonumber\\
	&\cdot\underbrace{\sum\nolimits^N_{i=1}\mathbb{E}\big[\|\tilde{u}^{({\tau})}_i-\bar{u}^{({\tau})}\|^2\big]}_\textrm{Gradient consensus error}.
    \label{eq:successive_difference_2}
\end{align}
It suggests that the expected descent of $J(\cdot)$ relies on both the expected \emph{gradient error} and the expected \emph{gradient consensus error}. For conciseness, we denote the gradient consensus error as $\delta_t\doteq\sum^N_{i=1}\mathbb{E}[\|\tilde{u}^{(t)}_i-\bar{u}^{(t)}\|^2]$. In what follows, we bound the two errors respectively.


Regarding the gradient error, we introduce \cref{lem:error_contraction} to show how it contracts over time. 
\begin{lemma}
\label{lem:error_contraction}
Denote constants $L_g\doteq H\beta_2(R_{\max}+b)/(1-\gamma)$, $G_g\doteq  H\beta_1(R_{\max}+b)/(1-\gamma)$, $c_w\doteq H(2H\beta^2_1 + \beta_2)(\sigma^2_w + 1)$ and $\tilde{L}_g\doteq \sqrt{2(L^2_g+G^2_g c_w)}$ where $b$ is the baseline reward. Then, for $t\in[T]$, the following fact holds:
\begin{align}
    \mathbb{E}\big[\|\bar{\varepsilon}^{(t)}\|^2\big] &\le\nu^2_t\cdot\mathbb{E}\big[\|\bar{\varepsilon}^{(t-1)}\|^2\big] + \frac{4\tilde{L}_g^2\nu^2_t\alpha^2_{t-1}}{DN}\mathbb{E}\big[\|\bar{u}^{(t-1)}\|^2\big]\nonumber\\
    &+\frac{2\sigma^2_g(1-\nu_t)^2}{DN}+ \frac{8(K-1)\tilde{L}_g^2\nu^2_t\alpha^2_{t-1}\delta_{t-1}}{KD N^2}
\end{align}
with $\bar{\varepsilon}^{(t)}$ being the gradient error defined in \cref{lem:successive_difference}.
\end{lemma}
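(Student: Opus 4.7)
The plan is to mimic the STORM-style momentum/variance-reduction analysis adapted to importance-sampled policy gradients and the federated consensus structure. Writing $\bar\nabla^{(t)}\doteq\frac{1}{N}\sum_i\nabla J(\theta_i^{(t)})$ and conditioning on the filtration $\mathcal{F}_{t-1}$ generated by everything up to and including the policies $\theta_i^{(t)}$ (but excluding the freshly drawn trajectories $\tau_{i,j}^{(t)}$), I would first average \cref{eq:local_direction} over $i$, subtract $\bar\nabla^{(t)}$, and add/subtract $\nu_t\bar\nabla^{(t-1)}$ and $\nu_t\bar\nabla^{(t)}$ to obtain the decomposition $\bar\varepsilon^{(t)}=\nu_t\bar\varepsilon^{(t-1)}+\nu_t Y_1+(1-\nu_t)Y_2$, where $Y_2=\tfrac{1}{DN}\sum_{i,j}g(\theta_i^{(t)};\tau_{i,j}^{(t)})-\bar\nabla^{(t)}$ is the fresh mini-batch noise and $Y_1=\tfrac{1}{DN}\sum_{i,j}[g(\theta_i^{(t)};\tau_{i,j}^{(t)})-w_{i,j}^{(t)}g(\theta_i^{(t-1)};\tau_{i,j}^{(t)})]-(\bar\nabla^{(t)}-\bar\nabla^{(t-1)})$ is the STORM variance-reduction correction. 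Unbiasedness of $g$ together with the importance-sampling identity $\mathbb{E}_{\tau\sim p(\cdot|\theta_i^{(t)})}[w_{i,j}^{(t)}g(\theta_i^{(t-1)};\tau)]=\nabla J(\theta_i^{(t-1)})$ imply $\mathbb{E}[Y_1\mid\mathcal{F}_{t-1}]=\mathbb{E}[Y_2\mid\mathcal{F}_{t-1}]=0$, so $\bar\varepsilon^{(t-1)}$ (which is $\mathcal{F}_{t-1}$-measurable) is orthogonal to them and
\[\mathbb{E}[\|\bar\varepsilon^{(t)}\|^2]=\nu_t^2\mathbb{E}[\|\bar\varepsilon^{(t-1)}\|^2]+\mathbb{E}[\|\nu_tY_1+(1-\nu_t)Y_2\|^2].\]

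Second, I would split $\|\nu_tY_1+(1-\nu_t)Y_2\|^2\le 2\nu_t^2\|Y_1\|^2+2(1-\nu_t)^2\|Y_2\|^2$ and bound each piece. For $Y_2$, independence across $(i,j)$ together with \cref{asmp:bounded_variance} gives $\mathbb{E}[\|Y_2\|^2]\le\sigma_g^2/(DN)$, yielding the $2\sigma_g^2(1-\nu_t)^2/(DN)$ term. For $Y_1$, independence of the centered per-$(i,j)$ summands reduces the bound to $\tfrac{1}{D^2N^2}\sum_{i,j}\mathbb{E}[\|g(\theta_i^{(t)};\tau)-w_{i,j}^{(t)}g(\theta_i^{(t-1)};\tau)\|^2]$, and each single-trajectory term would be controlled by the importance-sampled Lipschitz estimate $\mathbb{E}[\|g(\theta;\tau)-w(\theta,\theta';\tau)g(\theta';\tau)\|^2]\le\tilde L_g^2\|\theta-\theta'\|^2$. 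This estimate follows by decomposing $g(\theta;\tau)-wg(\theta';\tau)=(g(\theta;\tau)-g(\theta';\tau))+(1-w)g(\theta';\tau)$ with $\|a+b\|^2\le 2\|a\|^2+2\|b\|^2$: score-Lipschitzness from \cref{asmp:derivative_bounds} yields $L_g^2\|\theta-\theta'\|^2$ on the first piece, and a second-order Taylor expansion of $\log p(\tau|\cdot)$ combined with the boundedness $\|g(\theta';\tau)\|\le G_g$ yields $\mathbb{E}[(1-w)^2]\le c_w\|\theta-\theta'\|^2$ on the second, assembling to $\tilde L_g^2=2(L_g^2+G_g^2 c_w)$.

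Finally, I would convert $\sum_i\|\theta_i^{(t)}-\theta_i^{(t-1)}\|^2$ into the quantities on the right-hand side. Inside a communication round, $\theta_i^{(t)}-\theta_i^{(t-1)}=-\alpha_{t-1}\tilde u_i^{(t-1)}$, and the Pythagorean identity $\sum_i\|\tilde u_i^{(t-1)}\|^2=\delta_{t-1}+N\|\bar u^{(t-1)}\|^2$ (from $\sum_i(\tilde u_i^{(t-1)}-\bar u^{(t-1)})=0$) converts the $Y_1$ bound into $\tfrac{\tilde L_g^2\alpha_{t-1}^2}{DN}\mathbb{E}[\|\bar u^{(t-1)}\|^2]+\tfrac{\tilde L_g^2\alpha_{t-1}^2}{DN^2}\delta_{t-1}$. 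Multiplied by $2\nu_t^2$ this already matches the $\mathbb{E}[\|\bar u^{(t-1)}\|^2]$ and $\delta_{t-1}$ terms in the claim up to universal constants; the refined $(K-1)/K$ prefactor on $\delta_{t-1}$ and the numerical constants $4,8$ are recovered by handling the first step of each communication round separately, where \cref{eq:synchronazation} forces $\tilde u_i^{(t_q)}=\bar u^{(t_q)}$ so that the consensus contribution vanishes, and by the boundary case $t=t_q+1$ where $\theta_i^{(t)}-\theta_i^{(t-1)}$ additionally involves the pre-sync gap $\bar\theta^{(t-1)}-\theta_i^{(t-1)}$. The main obstacle will be the importance-sampled Lipschitz estimate in the second paragraph: controlling $\mathbb{E}[(1-w)^2]\le c_w\|\theta-\theta'\|^2$ requires a careful second-order Taylor expansion of $w=\exp(\log w)$ along the trajectory, invoking both the first- and second-order bounds of \cref{asmp:derivative_bounds} (via $\log p(\tau|\theta)=\log\mu(s_1)+\sum_h[\log T+\log\pi_\theta]$) and then \cref{asmp:bounded_variance} (specifically \cref{eq:bounded_variance_w}) to absorb remainders into $c_w=H(2H\beta_1^2+\beta_2)(\sigma_w^2+1)$; everything else is bookkeeping with orthogonality of conditionally-centered noise and the federated consensus split.
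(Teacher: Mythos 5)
Your proposal follows essentially the same route as the paper's own proof: the same conditional-unbiasedness/orthogonality recursion producing the $\nu_t^2\,\mathbb{E}[\|\bar{\varepsilon}^{(t-1)}\|^2]$ term, the same per-trajectory estimate $\mathbb{E}[\|g(\theta;\tau)-w\,g(\theta';\tau)\|^2]\le\tilde{L}_g^2\|\theta-\theta'\|^2$ obtained from the split $(g(\theta;\tau)-g(\theta';\tau))+(1-w)g(\theta';\tau)$ together with $\|g\|\le G_g$, $L_g$-Lipschitzness, and $\Var(w)\le c_w\|\theta-\theta'\|^2$, and the same consensus/average split of $\tilde{u}_i^{(t-1)}$ with the $K=1$ (consensus term vanishes) versus $K\ge2$ (where $(K-1)/K\ge 1/2$) case distinction yielding the $8(K-1)/K$ prefactor. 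Your only deviations are cosmetic and, if anything, slightly sharper: the exact identity $\sum_i\|\tilde{u}_i^{(t-1)}\|^2=\delta_{t-1}+N\|\bar{u}^{(t-1)}\|^2$ replaces the paper's add-and-subtract (which loses a factor of $2$), and your explicit flag of the post-synchronization step $t=t_q+1$ (where $\theta_i^{(t)}-\theta_i^{(t-1)}$ involves the pre-sync gap $\bar{\theta}^{(t-1)}-\theta_i^{(t-1)}$ rather than just $-\alpha_{t-1}\tilde{u}_i^{(t-1)}$) identifies a boundary subtlety that the paper's proof passes over silently.
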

\begin{proof}
	From the definition of $\bar{\varepsilon}^{(t)}$ and \cref{eq:auxiliary}, we have
	\begin{align}
		\mathbb{E}\big[\|\bar{\varepsilon}^{(t)}\|^2\big]
		&= \frac{1}{D^2 N^2}\sum^N_{i=1}\sum^{D}_{j=1}\mathbb{E}\bigg[\Big\|g(\theta^{(t)}_i;\tau^{(t)}_{i,j})-\nabla J(\theta^{(t)}_i)\nonumber\\
		&- \nu_t\Big( w^{(t)}_{i,j} g(\theta^{(t-1)}_i;\tau^{(t)}_{i,j})-\nabla J(\theta^{(t-1)}_i)\Big)\Big\|^2\bigg]\nonumber\\
		\label{eq:eq5}
		&+  \nu^2_t\mathbb{E}\big[\|\bar{\varepsilon}^{(t-1)}\|^2\big],
	\end{align}
	where we add and substract $(1/N)\sum^N_{i=1}\nu_t\nabla J(\theta^{(t-1)}_i)$, expand the norms, and use the fact that the corresponding cross terms are zero, which can be easily verified via the tower rule and the unbiasedness of the importance-weighted gradient estimator $w^{(t)}_{i,j} g(\theta^{(t-1)}_i;\tau^{(t)}_{i,j})$. That is, for any $\theta,\theta'\in\mathbb{R}^d$, the following holds:
	\begin{align} 
		\nabla J(\theta)&= \mathbb{E}_{\tau\sim p(\cdot|\theta)}\left[g(\theta;\tau)\right]\tag{the unbiasedness of $g(\theta;\tau)$}\\
		&= \int p(\tau|\theta')\cdot\frac{p(\tau|\theta)}{p(\tau|\theta')}\cdot g(\theta;\tau)\dif\tau\nonumber\\
		&=\mathbb{E}_{\tau\sim p(\cdot|\theta')}\left[w(\theta',\theta;\tau) g(\theta;\tau)\right].
	\end{align}
	For the first term in \cref{eq:eq5}, we have
	\begin{align}
		&\mathbb{E}\Big[\big\|g(\theta^{(t)}_i;\tau^{(t)}_{i,j})-\nabla J(\theta^{(t)}_i)\nonumber\\
		&- \nu_t\big( w^{(t)}_{i,j} g(\theta^{(t-1)}_i;\tau^{(t)}_{i,j})-\nabla J(\theta^{(t-1)}_i)\big)\big\|^2\Big]\nonumber\\
		=\;& 2\nu^2_t\mathbb{E}\Big[\big\|g(\theta^{(t)}_i;\tau^{(t)}_{i,j})-w^{(t)}_{i,j} g(\theta^{(t-1)}_i;\tau^{(t)}_{i,j})\nonumber\\
		&- \big(\nabla J(\theta^{(t)}_i)-\nabla J(\theta^{(t-1)}_i)\big)\big\|^2\Big]+2(1-\nu_t)^2\nonumber\\
		&\cdot\mathbb{E}\Big[\big\| g(\theta^{(t)}_i;\tau^{(t)}_{i,j})-\nabla J(\theta^{(t)}_i)\big\|^2\Big]\tag{from \cref{eq:squared_sum}}\nonumber\\
		\le\;& 2\nu^2_t\mathbb{E}\Big[\big\|g(\theta^{(t)}_i;\tau^{(t)}_{i,j})-w^{(t)}_{i,j} g(\theta^{(t-1)}_i;\tau^{(t)}_{i,j})\big\|^2\Big]+2\sigma^2_g\nonumber\\
		&\cdot(1-\nu_t)^2\tag{\cref{asmp:bounded_variance}, mean variance inequality}\nonumber\\
		\le\;& 4\nu^2_t\mathbb{E}\Big[\big\|g(\theta^{(t)}_i;\tau^{(t)}_{i,j})-g(\theta^{(t-1)}_i;\tau^{(t)}_{i,j})\big\|^2\Big]\nonumber\\ &+4\nu^2_t\mathbb{E}\Big[\big\|(1-w^{(t)}_{i,j}) g(\theta^{(t-1)}_i;\tau^{(t)}_{i,j})\big\|^2\Big]+2(1-\nu_t)^2\sigma^2_g\tag{adding and substracting $g(\theta^{(t-1)}_i;\tau^{(t)}_{i,j})$ and using \cref{eq:squared_sum}}\\
		\overset{(a)}{\le}\;& 4\nu^2_tL^2_g \mathbb{E}\big[\|\theta^{(t)}_i-\theta^{(t-1)}_i\|^2\big]+4\nu^2_tG^2_g\mathbb{E}\big[(1-w^{(t)}_{i,j})^2\big]\nonumber\\
		&+2(1-\nu_t)^2\sigma^2_g\tag{from \cref{eq:smooth_bounded_g}}\\
		\overset{(b)}{\le}\;& 4\nu^2_tL^2_g \mathbb{E}\big[\|\theta^{(t)}_i-\theta^{(t-1)}_i\|^2\big]+4\nu^2_tG^2_g c_w\mathbb{E}\big[\|\theta^{(t)}_i\nonumber\\
		&-\theta^{(t-1)}_i\|^2\big]+2(1-\nu_t)^2\sigma^2_g\tag{from \cref{eq:importance_mean_var}}\\
		=\;& 2\nu^2_t\tilde{L}_g^2\mathbb{E}\big[\|\theta^{(t)}_i-\theta^{(t-1)}_i\|^2\big]+2(1-\nu_t)^2\sigma^2_g\label{eq:eq8}\\
		\le\;& 2\alpha^2_{t-1}\nu^2_t\tilde{L}_g^2 \mathbb{E}\big[\|\tilde{u}^{(t-1)}_i\|^2\big]+2(1-\nu_t)^2\sigma^2_g\tag{from \cref{eq:local_update}}\\
		\le\;&2(1-\nu_t)^2\sigma^2_g+ 4\alpha^2_{t-1}\nu^2_t\tilde{L}_g^2\mathbb{E}\big[\|\tilde{u}^{(t-1)}_i-\bar{u}^{(t-1)}\|^2\big]\nonumber\\
		&+4\alpha^2_{t-1}\nu^2_t\tilde{L}_g^2\mathbb{E}\big[\|\bar{u}^{(t-1)}\|^2\big]\tag{from \cref{eq:squared_sum}}\\
		\le\;& \frac{8(K-1)\tilde{L}_g^2\nu^2_t  \alpha^2_{t-1}}{K}\cdot \mathbb{E}\big[\|\tilde{u}^{(t-1)}_i-\bar{u}^{(t-1)}\|^2\big]\nonumber\\
		\label{eq:eq6}
		&+4\alpha^2_{t-1}\nu^2_t\tilde{L}_g^2 \mathbb{E}\big[\|\bar{u}^{(t-1)}\|^2\big]+2(1-\nu_t)^2\sigma^2_g,
	\end{align}
	where the last inequality follows from the fact: ($i$) when $K=1$, $\tilde{u}^{(t-1)}_i=\bar{u}^{(t-1)}$, and when $K\ge 2$, $K-1/K\ge1/2$. Built on Assumptions \ref{asmp:derivative_bounds} and \ref{asmp:bounded_variance}, inequality $(a)$ holds due to \cite[Proposition 5.2]{xu2020improved}:
	\begin{align}
		\|g(\theta_1;\tau) - g(\theta_2;\tau)\|\le L_g\|\theta_1 - \theta_2\|, 
		~\|g(\theta;\tau)\|\le G_g,
		\label{eq:smooth_bounded_g}
	\end{align}
	and inequality $(b)$ follows \cite[Lemma 1]{cortes2010learning} and \cite[Lemma 6.1]{xu2020improved}: for $\tau\sim p(\cdot|\theta)$, we have
	\begin{align}
		\label{eq:importance_mean_var}
		\mathbb{E}\big[w(\theta,\theta';\tau)\big]=1,~\Var(w(\theta,\theta';\tau))\le c_w\|\theta-\theta'\|^2.
	\end{align}
	Plugging \cref{eq:eq6} in \cref{eq:eq5} completes the proof.
\end{proof}

Next, we bound the gradient consensus error in \cref{lem:gradient_consensus_error}.

\begin{lemma}
\label{lem:gradient_consensus_error}
For $t\in(t_q,t_{q+1}]$, the following fact holds:
\begin{align}
    \delta_t&\le \nu^2_t\Big(1+\frac{1}{K} + 8\alpha^2_{t-1}\tilde{L}_g^2K\Big)\delta_{t-1} + 32(1-\nu_t)^2L^2 K^2\nonumber\\
    &\cdot\sum\nolimits^{t}_{{\tau}=t_{q}+1}\alpha^2_{{\tau}}\delta_\tau + 8\alpha^2_{t-1}\tilde{L}_g^2\nu^2_tNK\mathbb{E}\big[\|\bar{u}^{(t-1)}\|^2\big]\nonumber\\
    &+ \frac{8(1-\nu_t)^2\sigma^2_g NK}{D},
    \label{eq:gradient_consensus_error}
\end{align}
where $L$ and $\tilde{L}_g$ are defined in \cref{lem:successive_difference,lem:error_contraction}, respectively.
\end{lemma}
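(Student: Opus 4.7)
The plan is to derive a one-step recursive inequality for $\delta_t$ that mirrors the arguments used in the proof of \cref{lem:error_contraction}, but tracks the inter-agent discrepancies rather than the mean gradient error. Subtracting the average of \cref{eq:local_direction} over $i$ from the formula itself gives the key recursion
\begin{align*}
\tilde{u}^{(t)}_i-\bar{u}^{(t)} = \nu_t\bigl(\tilde{u}^{(t-1)}_i-\bar{u}^{(t-1)}\bigr) + \bigl(Y^{(t)}_i-\bar{Y}^{(t)}\bigr),
\end{align*}
where $Y^{(t)}_i\doteq\frac{1}{D}\sum_{j}[g(\theta^{(t)}_i;\tau^{(t)}_{i,j})-\nu_t w^{(t)}_{i,j}\,g(\theta^{(t-1)}_i;\tau^{(t)}_{i,j})]$ collects the new stochasticity at step $t$ and $\bar{Y}^{(t)}$ is its inter-agent average. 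Applying Young's inequality $\|a+b\|^2\le(1+1/K)\|a\|^2+(1+K)\|b\|^2$, taking expectations, and summing over $i$ immediately yields the $\nu_t^2(1+1/K)\,\delta_{t-1}$ piece of the target, so the remaining task is to bound $(1+K)\sum_i\mathbb{E}[\|Y^{(t)}_i-\bar{Y}^{(t)}\|^2]$.

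The next step is to split $Y^{(t)}_i=G^{(t)}_i+\xi^{(t)}_i$ into the conditional-mean drift $G^{(t)}_i\doteq\nabla J(\theta^{(t)}_i)-\nu_t\nabla J(\theta^{(t-1)}_i)$ and a conditionally zero-mean noise $\xi^{(t)}_i$, apply $\|u+v\|^2\le 2\|u\|^2+2\|v\|^2$, and use the minimizer property $\sum_i\|x_i-\bar{x}\|^2\le\sum_i\|x_i-c\|^2$ on each piece. For the noise I would center at $c=0$ and, exploiting the conditional i.i.d.-ness of the $D$ trajectories per agent (so cross terms vanish), reuse the per-sample bound obtained in \cref{eq:eq8} of \cref{lem:error_contraction}, namely
\begin{align*}
\mathbb{E}[\|\xi^{(t)}_i\|^2]\le\tfrac{1}{D}\bigl(2\nu_t^2\tilde{L}_g^2\,\mathbb{E}[\|\theta^{(t)}_i-\theta^{(t-1)}_i\|^2]+2(1-\nu_t)^2\sigma_g^2\bigr).
\end{align*}
For the drift I would further decompose $G^{(t)}_i=(1-\nu_t)\nabla J(\theta^{(t)}_i)+\nu_t[\nabla J(\theta^{(t)}_i)-\nabla J(\theta^{(t-1)}_i)]$, center at the agent-independent constant $(1-\nu_t)\nabla J(\bar{\theta}^{(t)})$, and invoke $L$-smoothness of $J$ to produce $2(1-\nu_t)^2 L^2\|\theta^{(t)}_i-\bar{\theta}^{(t)}\|^2+2\nu_t^2 L^2\|\theta^{(t)}_i-\theta^{(t-1)}_i\|^2$.

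Third, I would convert these into the advertised quantities using two supporting identities. The iterate-difference $\theta^{(t)}_i-\theta^{(t-1)}_i=-\alpha_{t-1}\tilde{u}^{(t-1)}_i$ from \cref{eq:local_update}, together with $\|\tilde{u}^{(t-1)}_i\|^2\le 2\|\tilde{u}^{(t-1)}_i-\bar{u}^{(t-1)}\|^2+2\|\bar{u}^{(t-1)}\|^2$, pulls out the $\alpha_{t-1}^2\tilde{L}_g^2 K\nu_t^2\,\delta_{t-1}$ and $\alpha_{t-1}^2\tilde{L}_g^2 NK\nu_t^2\,\mathbb{E}[\|\bar{u}^{(t-1)}\|^2]$ contributions (after absorbing the residual $L^2$ into $\tilde{L}_g^2$ via $L\le\tilde{L}_g$); meanwhile \cref{lem:error_accumulation} bounds $\sum_i\mathbb{E}[\|\theta^{(t)}_i-\bar{\theta}^{(t)}\|^2]\le(K-1)\sum_{\tau=t_q+1}^{t}\alpha_\tau^2\delta_\tau$, producing the $(1-\nu_t)^2 L^2 K^2\sum_\tau\alpha_\tau^2\delta_\tau$ term. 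Combining all four contributions with the $\nu_t^2(1+1/K)\delta_{t-1}$ piece from step one, and using $1+K\le 2K$, yields the claimed inequality after collecting constants.

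The main obstacle I expect is the bookkeeping needed to ensure the smoothness constant $L$ ends up alone on the $\sum_\tau\alpha_\tau^2\delta_\tau$ term (which arises purely from the $(1-\nu_t)$ drift and does not involve the importance weight), while $\tilde{L}_g$ collects the terms that track $\|\theta^{(t)}_i-\theta^{(t-1)}_i\|^2$ through either gradient smoothness or importance-weight variance. A secondary subtlety is the boundary step $t=t_q+1$: the synchronization step \cref{eq:synchronazation} enforces $\tilde{u}^{(t_q)}_i=\bar{u}^{(t_q)}$, so $\delta_{t_q}=0$ annihilates the first term of the recursion, while the sum $\sum_{\tau=t_q+1}^{t}\alpha_\tau^2\delta_\tau$ beginning at the new round remains consistent and the self-referential term at $\tau=t$ can be absorbed once stepsizes are small enough to satisfy $32(1-\nu_t)^2 L^2 K^2\alpha_t^2<1$.
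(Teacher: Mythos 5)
Your overall architecture is the same as the paper's: the recursion $\tilde{u}^{(t)}_i-\bar{u}^{(t)}=\nu_t(\tilde{u}^{(t-1)}_i-\bar{u}^{(t-1)})+(Y^{(t)}_i-\bar{Y}^{(t)})$ with Young's inequality at exponent $y=1/K$ (this is exactly \cref{eq:eq7}), \cref{lem:error_accumulation} to convert $\sum_i\mathbb{E}\|\theta^{(t)}_i-\bar\theta^{(t)}\|^2$ into $(K-1)\sum_\tau\alpha^2_\tau\delta_\tau$, and the split $\|\tilde{u}^{(t-1)}_i\|^2\le2\|\tilde{u}^{(t-1)}_i-\bar{u}^{(t-1)}\|^2+2\|\bar{u}^{(t-1)}\|^2$ to extract the $\delta_{t-1}$ and $\mathbb{E}\|\bar{u}^{(t-1)}\|^2$ pieces. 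Where you diverge is the middle step: you split $Y^{(t)}_i$ into its conditional mean $G^{(t)}_i$ and zero-mean noise $\xi^{(t)}_i$, whereas the paper writes $Y^{(t)}_i-\bar{Y}^{(t)}=(1-\nu_t)(d^{(t)}_i-\bar{d}^{(t)})+\nu_t[(d^{(t)}_i-\tilde{d}^{(t-1)}_i)-\text{avg}]$ and bounds the importance-weighted difference $d^{(t)}_i-\tilde{d}^{(t-1)}_i$ \emph{whole} by $\tilde{L}_g^2\mathbb{E}\|\theta^{(t)}_i-\theta^{(t-1)}_i\|^2$ (the internal factor $2$ is already absorbed into $\tilde{L}_g^2=2(L_g^2+G_g^2c_w)$), reserving the variance/smoothness split only for the $(1-\nu_t)$-weighted term (\cref{eq:eq9}). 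Your variant is sound in structure, and the $1/D$ factor you get on the noise via conditional i.i.d.-ness is a genuine (unexploited) improvement on the $\sigma^2_g$ piece.

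However, there is a concrete gap: your route does not reach the lemma's stated constants. First, the absorption ``$L\le\tilde{L}_g$'' is unjustified — the paper defines $\tilde{L}\doteq\max\{\tilde{L}_g,L\}$ precisely because neither constant dominates the other in general (e.g., when $\beta_2\gg H\beta_1^2$ one can have $L>\tilde{L}_g$), so your drift term $2\nu_t^2L^2\|\theta^{(t)}_i-\theta^{(t-1)}_i\|^2$ cannot simply be folded into $\tilde{L}_g^2$. Second, even granting that replacement, your double decomposition (mean/noise split, then drift split, each costing a factor $2$, on top of the factor $2$ inside the per-sample bound \cref{eq:eq8}) yields a coefficient $16K\nu_t^2\bigl(\tilde{L}_g^2/D+L^2\bigr)\alpha^2_{t-1}$ on both $\delta_{t-1}$ and $N\mathbb{E}\|\bar{u}^{(t-1)}\|^2$, which for $D=1$ is up to $32K\nu_t^2\tilde{L}_g^2\alpha^2_{t-1}$ — a factor $4$ above the stated $8\alpha^2_{t-1}\tilde{L}_g^2\nu_t^2K$. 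So as written you prove a weakened version of \cref{lem:gradient_consensus_error} with $\tilde{L}$ in place of $\tilde{L}_g$ and degraded absolute constants; since \cref{lem:accumulated_grad_consensus_error} and \cref{thm:gradient_bound} only track $\tilde{L}$ and absolute constants, this weaker form would still carry the downstream analysis with minor adjustments, but it is not the lemma as stated. To land the exact bound, keep $d^{(t)}_i-\tilde{d}^{(t-1)}_i$ intact and bound it directly via $\tilde{L}_g$ as in the paper, avoiding the extra mean/noise split on the $\nu_t$-part. (A final nit: the ``self-referential'' $\tau=t$ term needs no absorption here — the stated bound includes it on the right-hand side, and \cref{lem:error_accumulation} in fact only produces the sum up to $\tau=t-1$; the absorption argument you sketch belongs to \cref{lem:accumulated_grad_consensus_error}.)
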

\begin{proof}
We denote $\tilde{d}^{(t-1)}_i\doteq(1/D)\sum^{D}_{j=1}w^{(t)}_{i,j}\cdot g(\theta^{(t-1)}_i;\tau^{(t)}_{i,j})$ and $d^{(t)}_i\doteq(1/D)\sum^{D}_{j=1}g(\theta^{(t)}_i;\tau^{(t)}_{i,j})$. For any ${y}>0$, we have
\begin{align}
    \delta_t\le\;& (1+{y})\nu^2_t\delta_{t-1} + (1+\frac{1}{{y}})\mathbb{E}\bigg[\sum\nolimits^N_{i=1}	\Big\|d^{(t)}_i-\frac{1}{N}\sum^N_{j=1}d^{(t)}_{j}\nonumber\\
    -\;& \nu_t\Big(  \tilde{d}^{(t-1)}_i-\frac{1}{N}\sum\nolimits^N_{j=1}\tilde{d}^{(t-1)}_{j}\Big)\Big\|^2\bigg],
    \label{eq:eq7}
\end{align}
which is derived by substituting the expressions of $\tilde{u}^{(t)}_i$ and $\bar{u}^{(t)}$, extending the norm, and using $2\theta_1^\transpose \theta_2\le q \|\theta_1\|^2 + (1/q)\|\theta_2\|^2$. For the second term of \cref{eq:eq7}, we can write
\begin{align}
    &\mathbb{E}\bigg[\sum^N_{i=1}	\Big\|d^{(t)}_i-\frac{1}{N}\sum^N_{{j}=1}d^{(t)}_{{j}}-\nu_t\Big(  \tilde{d}^{(t-1)}_i-\frac{1}{N}\sum^N_{{j}=1}\tilde{d}^{(t-1)}_{{j}}\Big)\Big\|^2\bigg]\nonumber\\
    &\le 2(1-\nu_t)^2\underbrace{\mathbb{E}\Big[\sum^N_{i=1}\big\|d^{(t)}_i-\frac{1}{N}\sum^N_{{j}=1}d^{(t)}_{{j}}\big\|^2\Big]}_{(a)} \nonumber\\
    &+2\nu^2_t\sum^N_{i=1}\underbrace{\mathbb{E}\Big[\big\|d^{(t)}_i- \tilde{d}^{(t-1)}_i\big\|^2\Big]}_{(b)},
    \label{eq:eq12}
\end{align}
where the inequality is derived from \cref{eq:squared_sum} and the fact: for any $\theta_1,\theta_2,\dots,\theta_n\in\mathbb{R}^d$ and $\bar{\theta}=(1/n)\sum^n_{i=1}\theta_i$, it follows
\begin{align}
    \label{eq:difference_bound}
    \sum^n_{i=1}\|\theta_i - \bar{\theta}\|^2 \le \sum^n_{i=1} \|\theta_i\|^2.
\end{align}
For $(b)$, analogous to \cref{eq:eq8}, we have
\begin{align}
    (b) 
    \label{eq:eq10}
    \le \tilde{L}_g^2\mathbb{E}\big[\|\theta^{(t)}_i-\theta^{(t-1)}_i\|^2\big],
\end{align}
For $(a)$, we have
\begin{align}
    (a) 
    \le\;& \mathbb{E}\Bigg[2\sum\nolimits^N_{i=1}\Big\|d^{(t)}_i - \nabla J(\theta^{(t)}_i) -\Big(\frac{1}{N}\sum\nolimits^N_{{j}=1}d^{(t)}_{{j}}\nonumber\\
    &-\frac{1}{N}\sum\nolimits^N_{{j}=1}\nabla J(\theta^{(t)}_{{j}})\Big) \Big\|^2 + 2\sum\nolimits^N_{i=1}\Big\| \nabla J(\theta^{(t)}_i) \nonumber\\
    &- \frac{1}{N}\sum\nolimits^N_{{j}=1}\nabla J(\theta^{(t)}_{{j}})\Big\|^2\Bigg]\tag{from \cref{eq:squared_sum}}\\
    =\;& \frac{2}{{D}^2}\sum\nolimits^N_{i=1}\underbrace{\mathbb{E}\bigg[\Big\|\sum\nolimits^{D}_{j=1}g(\theta^{(t)}_i;\tau^{(t)}_{i,j}) - \nabla J(\theta^{(t)}_i) \Big\|^2\bigg]}_{(c)} \nonumber\\
    &+ 2\sum\nolimits^N_{i=1}\mathbb{E}\bigg[\underbrace{\Big\| \nabla J(\theta^{(t)}_i) - \frac{1}{N}\sum\nolimits^N_{{j}=1}\nabla J(\theta^{(t)}_{{j}})\Big\|^2}_{\mathclap{(d)}}\bigg]\tag{using \cref{eq:difference_bound} and rearranging terms}\\
    \label{eq:eq11}
    =\;&\frac{2N\sigma^2_g}{D} + 8L^2 \sum\nolimits^N_{i=1}\mathbb{E}\big[\| \theta^{(t)}_i-\bar{\theta}^{(t)}\|^2\big],
\end{align}
Term $(c)$ can be bounded via expanding the norm, eliminating zero expected cross terms, and using \cref{asmp:bounded_variance} as follows:
\begin{align}
    (c) = \sum^{D}_{j=1}\mathbb{E}\bigg[\Big\|g(\theta^{(t)}_i;\tau^{(t)}_{i,j}) - \nabla J(\theta^{(t)}_i) \Big\|^2\bigg]\le D\sigma^2_g.
\end{align}
From \cref{eq:smoothness,eq:squared_sum}, term $(d)$ is bounded by
\begin{align}
    (d) &= \Big\| \nabla J(\theta^{(t)}_i) - \nabla J(\bar{\theta}^{(t)}) +  \nabla J(\bar{\theta}^{(t)})- \sum^N_{{j}=1} \frac{\nabla J(\theta^{(t)}_{{j}})}{N}\Big\|^2\nonumber\\
    &\le 2L^2\big\| \theta^{(t)}_i-\bar{\theta}^{(t)}\big\|^2 +  \frac{2L^2}{N}\sum^N_{{j}=1}\big\| \theta^{(t)}_{{j}}-\bar{\theta}^{(t)}\big\|^2.
\end{align}
Substituting $(a)$, $(b)$ in \cref{eq:eq12} and rearranging terms yield
\begin{align}
    &\mathbb{E}\Bigg[\sum^N_{i=1}	\bigg\|d^{(t)}_i-\frac{1}{N}\sum^N_{j=1}d^{(t)}_{j}-\nu_t\Big(  \tilde{d}^{(t-1)}_i-\frac{1}{N}\sum^N_{j=1}\tilde{d}^{(t-1)}_{j}\Big)\bigg\|^2\Bigg]\nonumber\\
    &\le \frac{4(1-\nu_t)^2\sigma^2_g N}{D} +  2\tilde{L}_g^2\nu^2_t\sum^N_{i=1}\mathbb{E}\big[\|\theta^{(t)}_i-\theta^{(t-1)}_i\|^2\big] \nonumber\\
    &+ 16(1-\nu_t)^2L^2 \sum^N_{i=1}\mathbb{E}\big[\| \theta^{(t)}_i-\bar{\theta}^{(t)}\|^2\big].
    \label{eq:eq9}
\end{align}
Plugging \cref{eq:eq9} into \cref{eq:eq7}, for $t\in(t_q,t_{q+1}]$, we obtain
\begin{align}
    \delta_t
    \le\;& \nu^2_t\Big(1+{y} + 4\alpha^2_{t-1}\tilde{L}_g^2(1+\frac{1}{{y}})\Big)\delta_{t-1}\nonumber\\
    +\;& 16L^2(1-\nu_t)^2 (K-1)(1+\frac{1}{{y}})\sum^{t}_{{\tau}=t_{q}+1}\alpha^2_{{\tau}}\delta_\tau\nonumber\\
    +\;& 4\tilde{L}_g^2\nu^2_t \alpha^2_{t-1} N(1+\frac{1}{{y}})\cdot\mathbb{E}\big[\|\bar{u}^{(t-1)}\|^2\big]\nonumber\\
    +\;& (1+\frac{1}{{y}})\cdot\frac{4(1-\nu_t)^2\sigma^2_g N}{D}, 
\end{align}
where the inequality is derived via using \cref{lem:error_accumulation}, adding and substracting $\bar{u}^{(t-1)}$ in $\|\tilde{u}^{(t-1)}_i\|^2$, and then applying \cref{eq:squared_sum}. Letting ${y}=1/K$ and using $1+K\le2K$ yield the result.
\end{proof}

\cref{lem:error_contraction,lem:gradient_consensus_error} bound the expected gradient error and the expected gradient shift while quantifying the impacts of learning rates, momentum parameters, local steps and interaction frequency on the convergence. We proceed to show how to select these parameters correctly to optimize the communication and interaction complexity.

\begin{lemma}
\label{lem:gradient_error_bound}
For $t\in[T]$, if $\nu_{t+1}/\alpha_t+ 64{\tilde{L}}^2\alpha_{t}/(DN)\le1/\alpha_{t-1}$ and $\nu_{t+1}=1 - c_\nu \alpha^2_{t}$ hold with $\tilde{L}=\max\{\tilde{L}_g,L\}$, we have
\begin{align}
    \alpha_{t}\mathbb{E}\big[\|\bar{\varepsilon}^{(t)}\|^2\big]&\le \frac{DN}{64{\tilde{L}}^2}\bigg(\frac{\mathbb{E}\big[\|\bar{\varepsilon}^{(t)}\|^2\big]}{\alpha_{t-1}}-\frac{\mathbb{E}\big[\|\bar{\varepsilon}^{(t+1)}\|^2\big]}{\alpha_t}\bigg) + \alpha_{t}\delta_t \nonumber\\
    &\cdot  \frac{K-1}{8NK} + \frac{\alpha_{t}}{16}\mathbb{E}\big[\|\bar{u}^{(t)}\|^2\big]+\frac{c^2_\nu\sigma^2_g\alpha^3_{t} }{32{\tilde{L}}^2}.
    \label{eq:eq15}
\end{align}
\end{lemma}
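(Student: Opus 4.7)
The plan is to derive this bound by applying the error-contraction inequality of \cref{lem:error_contraction} at step $t+1$ (rather than $t$), so that the recursion naturally couples $\mathbb{E}[\|\bar{\varepsilon}^{(t)}\|^2]$ with $\mathbb{E}[\|\bar{\varepsilon}^{(t+1)}\|^2]$. Explicitly, I would write
\begin{align*}
\mathbb{E}\big[\|\bar{\varepsilon}^{(t+1)}\|^2\big] &\le \nu^2_{t+1}\mathbb{E}\big[\|\bar{\varepsilon}^{(t)}\|^2\big] + \tfrac{4\tilde{L}_g^2\nu^2_{t+1}\alpha^2_{t}}{DN}\mathbb{E}\big[\|\bar{u}^{(t)}\|^2\big] \\
&\quad+ \tfrac{2\sigma^2_g(1-\nu_{t+1})^2}{DN} + \tfrac{8(K-1)\tilde{L}_g^2\nu^2_{t+1}\alpha^2_{t}\delta_{t}}{KDN^2},
\end{align*}
and then divide throughout by $\alpha_t$ to put the $(t+1)$-error in the normalized form $\mathbb{E}[\|\bar{\varepsilon}^{(t+1)}\|^2]/\alpha_t$ that appears on the right-hand side of the target inequality.

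Next, I would rearrange to isolate $\frac{\nu^2_{t+1}}{\alpha_t}\mathbb{E}[\|\bar{\varepsilon}^{(t)}\|^2]$ on the left, and then invoke the stepsize/momentum condition $\nu_{t+1}/\alpha_t + 64\tilde{L}^2\alpha_t/(DN) \le 1/\alpha_{t-1}$. Since $\nu_{t+1}\in[0,1]$, we have $\nu^2_{t+1}/\alpha_t \le \nu_{t+1}/\alpha_t \le 1/\alpha_{t-1} - 64\tilde{L}^2\alpha_t/(DN)$, which turns the left-hand side into $\bigl(1/\alpha_{t-1} - 64\tilde{L}^2\alpha_t/(DN)\bigr)\mathbb{E}[\|\bar{\varepsilon}^{(t)}\|^2]$. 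Moving the second piece to the right-hand side produces the key quantity $\frac{64\tilde{L}^2\alpha_t}{DN}\mathbb{E}[\|\bar{\varepsilon}^{(t)}\|^2]$, while the first piece combines with $\mathbb{E}[\|\bar{\varepsilon}^{(t+1)}\|^2]/\alpha_t$ to form the telescoping difference $\mathbb{E}[\|\bar{\varepsilon}^{(t)}\|^2]/\alpha_{t-1} - \mathbb{E}[\|\bar{\varepsilon}^{(t+1)}\|^2]/\alpha_t$.

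Finally, I would multiply both sides by $DN/(64\tilde{L}^2)$. The $\mathbb{E}[\|\bar{u}^{(t)}\|^2]$ coefficient becomes $\tilde{L}_g^2\nu^2_{t+1}\alpha_t/(16\tilde{L}^2)\le\alpha_t/16$ using $\tilde{L}_g\le\tilde{L}$ and $\nu_{t+1}\le 1$; the $\delta_t$ coefficient simplifies to $\alpha_t(K-1)/(8NK)$ by the same bound; and the noise term becomes $\sigma^2_g(1-\nu_{t+1})^2/(32\tilde{L}^2\alpha_t)$, which collapses to $c_\nu^2\sigma^2_g\alpha_t^3/(32\tilde{L}^2)$ via the identity $1-\nu_{t+1}=c_\nu\alpha_t^2$. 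Reassembling these pieces gives exactly \cref{eq:eq15}.

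I do not anticipate a genuinely hard step: the entire argument is a one-shot rearrangement of \cref{lem:error_contraction}. The most delicate bookkeeping will be ensuring the constants on the $\mathbb{E}[\|\bar{u}^{(t)}\|^2]$ term and the $\delta_t$ term contract to $1/16$ and $(K-1)/(8NK)$ respectively after the multiplication by $DN/(64\tilde{L}^2)$, and correctly substituting $1-\nu_{t+1}=c_\nu\alpha_t^2$; the choice of the constant $64$ in the stepsize condition is precisely what makes the coefficient of $\mathbb{E}[\|\bar{\varepsilon}^{(t)}\|^2]$ on the right-hand side match the term we want on the left.
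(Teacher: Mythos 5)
Your proposal is correct and follows essentially the same route as the paper's proof: apply \cref{lem:error_contraction} at step $t+1$, divide by $\alpha_t$, use $\nu_{t+1}^2\le\nu_{t+1}\le 1$ together with the condition $\nu_{t+1}/\alpha_t+64\tilde{L}^2\alpha_t/(DN)\le 1/\alpha_{t-1}$ to produce the telescoping difference, and multiply through by $DN/(64\tilde{L}^2)$ with $1-\nu_{t+1}=c_\nu\alpha_t^2$ handling the noise term. The constant bookkeeping you describe for the $\mathbb{E}\big[\|\bar{u}^{(t)}\|^2\big]$ and $\delta_t$ coefficients matches the paper exactly.
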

\begin{proof}
From \cref{lem:error_contraction} and $\nu_{t+1}^2\le\nu_{t+1}\le 1$, for all $t\in[T]$, we can write
\begin{align}
    &\frac{\mathbb{E}\big[\|\bar{\varepsilon}^{(t+1)}\|^2\big]}{\alpha_t}-\frac{\mathbb{E}\big[\|\bar{\varepsilon}^{(t)}\|^2\big]}{\alpha_{t-1}}\nonumber\\
    \le\;& \bigg(\frac{\nu_{t+1}}{\alpha_t}-\frac{1}{\alpha_{t-1}}\bigg)\mathbb{E}\big[\|\bar{\varepsilon}^{(t)}\|^2\big] +\frac{4\tilde{L}_g^2\alpha_{t}}{DN}\mathbb{E}\big[\|\bar{u}^{(t)}\|^2\big]\nonumber\\
    +\;&\frac{8(K-1)\tilde{L}_g^2\alpha_{t}\delta_t}{KD N^2}+\frac{2\sigma^2_g(1-\nu_{t+1})^2}{DN\alpha_t}.
    \label{eq:eq14}
\end{align}
Using the contions and rearranging terms yields the result.
\end{proof}

\begin{lemma}
\label{lem:accumulated_grad_consensus_error}
For each $t\in(t_q,t_{q+1}]$, if $c_\nu\le128\sqrt{2}L^2/(DN)$, $\alpha_{t}\le1/(16{\tilde{L}} K)$, and $\nu_{t}=1 - c_\nu \alpha^2_{t-1}$ hold, then we have
\begin{align}
    \frac{K-1}{4NK}\sum^{t}_{{\tau}=t_q+1}\alpha_{{\tau}}\delta_\tau\le\ \sum^{t-1}_{{\tau}=t_{q}}\frac{\alpha_{{\tau}}}{64} \mathbb{E}\big[\|\bar{u}^{({\tau})}\|^2\big]+\frac{ c_\nu^2\sigma^2_g \alpha^3_{{\tau}}}{64D{\tilde{L}}^2}.
    \label{eq:eq32}
\end{align}
\end{lemma}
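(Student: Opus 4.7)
The plan is to unroll the recursion from \cref{lem:gradient_consensus_error} within a single communication round $(t_q,t_{q+1}]$, where the boundary condition $\delta_{t_q}=0$ holds because the synchronization step \cref{eq:synchronazation} ensures $\tilde u^{(t_q)}_i=\bar u^{(t_q)}$ for every $i$. Substituting $\nu_t=1-c_\nu\alpha_{t-1}^2$ converts every $(1-\nu_t)^2$ factor into $c_\nu^2\alpha_{t-1}^4$, and the hypotheses $\alpha_t\le 1/(16\tilde L K)$ and $\nu_t\le 1$ bound the multiplier $\nu_t^2(1+1/K+8\alpha_{t-1}^2\tilde L_g^2K)$ on $\delta_{t-1}$ by $1+O(1/K)$. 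Combined with $c_\nu\le 128\sqrt{2}L^2/(DN)$, the self-referencing coefficient $B_t:=32(1-\nu_t)^2L^2K^2=32c_\nu^2\alpha_{t-1}^4L^2K^2$ satisfies $B_t\alpha_\tau^2\le 1/16$, which is small enough to be absorbed later.

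Next, I would multiply the resulting recursion by $\alpha_t$ and sum it for $t\in(t_q,t']$, swapping the order of summation in $\sum_t\alpha_tB_t\sum_\tau\alpha_\tau^2\delta_\tau$. The $\alpha_tA_t\delta_{t-1}$ contribution combines with $\delta_{t_q}=0$ into a shifted copy of $\sum_\tau\alpha_\tau\delta_\tau$ inflated by at most $(1+2/K)^K\le e^2$; together with the $B_t\alpha_\tau^2\le 1/16$ weight, this leaves a strictly positive coefficient in front of $\sum_\tau\alpha_\tau\delta_\tau$ on the left-hand side, with only the two forcing terms remaining on the right.

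Finally, those forcing contributions $\alpha_tC^1_t\|\bar u^{(t-1)}\|^2=8\alpha_t\alpha_{t-1}^2\tilde L_g^2\nu_t^2NK\|\bar u^{(t-1)}\|^2$ and $\alpha_tC^2_t=8\alpha_tc_\nu^2\alpha_{t-1}^4\sigma_g^2NK/D$ carry explicit $NK$ factors that exactly cancel the outer $(K-1)/(4NK)$ of the target bound; one further use of $\alpha_t,\alpha_{t-1}\le 1/(16\tilde L K)$ reduces them to $\alpha_{t-1}\|\bar u^{(t-1)}\|^2/64$ and $c_\nu^2\sigma_g^2\alpha_{t-1}^3/(64D\tilde L^2)$, matching \eqref{eq:eq32}. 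The main obstacle is precisely this absorption of the self-referencing double sum: both hypotheses ($c_\nu\le 128\sqrt{2}L^2/(DN)$ and $\alpha_t\le 1/(16\tilde L K)$) must be used jointly and tightly, since any loosening in either would prevent the net coefficient of $\sum_\tau\alpha_\tau\delta_\tau$ from remaining strictly positive.
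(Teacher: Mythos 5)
Your proposal takes essentially the same route as the paper's proof: unroll the recursion of \cref{lem:gradient_consensus_error} from the boundary condition $\delta_{t_q}=0$, substitute the momentum schedule so that $(1-\nu_t)^2=c_\nu^2\alpha_{t-1}^4$, multiply by $\alpha_t$, sum over the round, and absorb the self-referencing double sum (via the order swap and the joint use of $c_\nu\le128\sqrt{2}L^2/(DN)$ and $\alpha_t\le1/(16\tilde{L}K)$) so that the coefficient of $\sum_\tau\alpha_\tau\delta_\tau$ stays strictly positive, which the paper quantifies as $1-96K^4L^2c_\nu^2\big(1/(16LK)\big)^6\ge1/4$. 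The only cosmetic differences are your compounding constant $(1+2/K)^K\le e^{2}$ versus the paper's $(1+33/(32K))^K\le 3$ and slightly different bookkeeping of the absolute constants.
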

\begin{proof}
Due to $\nu^2_t\le1$,  $\alpha_t\le 1/(16K\tilde{L}_g)$ and \cref{lem:gradient_consensus_error}, for each $t\in(t_q,t_{q+1}]$, we have
\begin{align}
    \delta_t&\le \left(1+\frac{33}{32K}\right)\vphantom{\sum^N_{i=1}}\delta_{t-1} + 32K^2L^2(1-\nu_t)^2 \sum^{t}_{{\tau}=t_{q}+1}\alpha^2_{{\tau}}\delta_\tau \nonumber\\
    &+ \frac{ N\tilde{L}_g \alpha_{t-1}}{2} \mathbb{E}\big[\|\bar{u}^{(t-1)}\|^2\big]+ \frac{8 NK\sigma^2_g(1-\nu_t)^2}{D}.
    \label{eq:eq13}
\end{align}
Due to $\delta_{t_q}=0$, applying \cref{eq:eq13} recursively for ${\tau}\in(t_q,t]$, we obtain
\begin{align}
    \delta_t
    \le\;&96K^2L^2c_\nu^2\sum^{t-1}_{{\tau}=t_q}  \alpha^4_{{\tau}} \sum^{{\tau}+1}_{{n}=t_{q}+1}\alpha^2_{{n}}\delta_n +\frac{24 c_\nu^2\sigma^2_g NK}{D}\sum^{t-1}_{{\tau}=t_q} \alpha^4_{{\tau}}\nonumber\\
    &+\frac{ 3N\tilde{L}_g}{2}\sum^{t-1}_{{\tau}=t_q}\alpha_{{\tau}} \mathbb{E}\big[\|\bar{u}^{({\tau})}\|^2\big] \tag{from $t-{\tau}-1\le K$ and $(1+33/(32K))^K\le \mathrm{e}^{33/22}\le3$}\\
    \le\;&96K^3L^2c_\nu^2\bigg(\frac{1}{16LK}\bigg)^5 \sum^{t-1}_{{\tau}=t_{q}+1}\alpha_{{\tau}}\delta_\tau +\frac{3 c_\nu^2\sigma^2_g N}{2B{\tilde{L}}}\sum^{t-1}_{{\tau}=t_q} \alpha^3_{{\tau}}\nonumber\\
    &+\frac{ 3N\tilde{L}_g}{2}\sum^{t-1}_{{\tau}=t_q}\alpha_{{\tau}} \mathbb{E}\big[\|\bar{u}^{({\tau})}\|^2\big],
    \label{eq:eq22}
\end{align}
where the last inequality holds from $\alpha_{t}\le1/(16{\tilde{L}} K)$. Multiplying \cref{eq:eq22} by $\alpha_{t}$ and summing over $t=t_q + 1$ to ${\tau}\in(t_q,t_{q+1}]$, we get
\begin{align}
    \sum^{{\tau}}_{t=t_q+1}\alpha_t\delta_t
    &\le\frac{ 3N}{32} \sum^{{\tau}-1}_{{n}=t_{q}}\alpha_{{n}} \mathbb{E}\big[\|\bar{u}^{({n})}\|^2\big]+\frac{3 c_\nu^2\sigma^2_g N}{32D{\tilde{L}}^2} \sum^{{\tau}-1}_{{n}=t_{q}} \alpha^3_{{n}}\nonumber\\
    &+96K^4L^2c_\nu^2\bigg(\frac{1}{16LK}\bigg)^6 \sum^{{\tau}}_{{n}=t_{q}+1}\alpha_{{n}}\delta_n,
\end{align}
where we use ${\tau}-t_q\le K$, $\alpha_{t}\le1/(16{\tilde{L}} K)$ and $\tilde{u}^{(t_q)}_i=\bar{u}^{(t_q)}$. Rearranging terms yields
\begin{align}
    &\bigg(1-96K^4L^2c_\nu^2\Big(\frac{1}{16LK}\Big)^6\bigg)\sum^{{\tau}}_{t=t_q+1}\alpha_t\delta_t\nonumber\\
    &\le\frac{ 3N}{32} \sum^{{\tau}-1}_{{n}=t_{q}}\alpha_{{n}} \mathbb{E}\big[\|\bar{u}^{({n})}\|^2\big]+\frac{3 Nc_\nu^2\sigma^2_g}{32D{\tilde{L}}^2} \sum^{{\tau}-1}_{{n}=t_{q}} \alpha^3_{{n}}.
\end{align}
Due to $c_\nu\le128\sqrt{2}L^2/(DN)$, $1-96K^4L^2c_\nu^2 1/(16LK)^6\ge1/4$ holds, thereby completing the proof.
\end{proof}

Now, we are ready to establish the convergence property.

\begin{theorem}
\label{thm:gradient_bound}
Suppose that the sequences of learning rates and momentum parameters across interaction steps are selected as
$\alpha_t = c_\alpha/(c_t+\sigma^2_g t)^{1/3}$ and $\nu_{t+1}=1 - c_\nu \alpha^2_{t}$ respectively. Then, the following fact holds true:
\begin{align}
    &\frac{1}{T}\sum\nolimits^{T}_{t=1} \mathbb{E}\Big[\big\|\nabla J(\bar{\theta}^{(t)}) \big\|^2\Big]\nonumber\\
    \le\;&\bigg(\frac{32\tilde{L}K}{T} + \frac{2\tilde{L}}{(DN)^{2/3}T^{2/3}}\bigg) \bigg(J(\bar{\theta}^{(1)}) -J^*\bigg)\nonumber\\
    + \;&\bigg(\frac{2^{15} K}{T} + \frac{2^{11}}{(DN)^{2/3}T^{2/3}}\bigg)\bigg(1+\ln\left(1+T\right)\bigg)\sigma^2_g\nonumber\\
    +\;& \bigg(\frac{8DK^2}{\tilde{D}T} + \frac{DK}{2\tilde{D}(DN)^{2/3}T^{2/3}}\bigg)\sigma^2_g,
    \label{eq:eq31}
\end{align}
where $J^*\doteq\min_\theta J(\theta)$, and $c_\alpha$, $c_\nu$ and $c_t$ are defined as
\begin{align*}
    c_t\doteq\max\left\{\frac{c^3_\nu c^3_\alpha}{2^{12}K^3{\tilde{L}}^3},2^{12}K^3D^2N^2\sigma^2_g - \sigma^2_g t,2\sigma^2_g\right\}\\
    c_\nu\doteq\frac{{\tilde{L}}^2}{24 K(DN)^2}+\frac{64{\tilde{L}}^2}{DN},~c_\alpha\doteq\frac{(DN\sigma_g)^{2/3}}{{\tilde{L}}} .
\end{align*} 
\end{theorem}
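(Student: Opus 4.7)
My plan is to chain the four preceding lemmas into a telescoping sum on $J$ and then specialize the chosen schedules for $\alpha_t$ and $\nu_t$. Concretely, the starting point is Lemma~\ref{lem:successive_difference}. I first substitute Lemma~\ref{lem:error_accumulation} for the $\|\theta_i^{(t)}-\bar{\theta}^{(t)}\|^2$ term (producing \cref{eq:successive_difference_2}), so that the one-step descent involves only $\|\nabla J(\bar\theta^{(t)})\|^2$, the drift $\|\bar u^{(t)}\|^2$, the gradient error $\|\bar\varepsilon^{(t)}\|^2$, and backward sums of $\alpha_\tau^2\delta_\tau$.

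Next, I feed Lemma~\ref{lem:gradient_error_bound} into this inequality to rewrite $\alpha_t\mathbb{E}[\|\bar\varepsilon^{(t)}\|^2]$ as the telescoping piece $(DN/64\tilde L^2)\bigl(\mathbb{E}[\|\bar\varepsilon^{(t)}\|^2]/\alpha_{t-1}-\mathbb{E}[\|\bar\varepsilon^{(t+1)}\|^2]/\alpha_t\bigr)$ plus $\alpha_t\delta_t(K-1)/(8NK)$, a small $\alpha_t\|\bar u^{(t)}\|^2/16$ piece, and an $\alpha_t^3\sigma_g^2$ residual. After summing the resulting inequality for $t=1,\dots,T$, the $J$-terms and $\bar\varepsilon$-terms telescope, while the $\alpha_t\|\bar u^{(t)}\|^2/16$ piece is absorbed into the original $-(\alpha_t-\alpha_t^2 L)\|\bar u^{(t)}\|^2/2$ descent term (which the schedule will keep non-positive because $\alpha_t L\le 1/2$ under $\alpha_t\le 1/(16\tilde L K)$). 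The remaining accumulation $\sum_t\alpha_t\delta_t$, arising both here and from the Lemma~\ref{lem:error_accumulation} substitution, is bounded block-by-block via Lemma~\ref{lem:accumulated_grad_consensus_error}, which again contributes only another $\|\bar u^{(t)}\|^2/64$ term and an extra $\alpha_\tau^3\sigma_g^2$ term. Both absorptions are safe as long as the coefficients I pick for $\alpha_t$ and $\nu_t$ satisfy the standing inequalities in Lemmas~\ref{lem:gradient_error_bound}--\ref{lem:accumulated_grad_consensus_error}.

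The hard part is precisely this verification step: I need to check, for the schedule $\alpha_t=c_\alpha/(c_t+\sigma_g^2 t)^{1/3}$ and $\nu_{t+1}=1-c_\nu\alpha_t^2$ with the constants in \cref{eq:parameters}, that (i) $\nu_{t+1}/\alpha_t+64\tilde L^2\alpha_t/(DN)\le 1/\alpha_{t-1}$, (ii) $\alpha_t\le 1/(16\tilde L K)$, and (iii) $c_\nu\le 128\sqrt{2}L^2/(DN)$. Condition (iii) is immediate from the explicit formula for $c_\nu$. For (ii), the definition of $c_t$ enforces $c_t\ge c_\nu^3 c_\alpha^3/(2^{12}K^3\tilde L^3)$, which forces $c_\alpha/(c_t+\sigma_g^2 t)^{1/3}\le 1/(16\tilde L K)$ for every $t$. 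For (i), the key step is a short mean-value/convexity argument: writing $1/\alpha_{t-1}=(c_{t-1}+\sigma_g^2(t-1))^{1/3}/c_\alpha$ and expanding $(c_t+\sigma_g^2 t)^{1/3}-(c_{t-1}+\sigma_g^2(t-1))^{1/3}\le \sigma_g^2/\bigl(3(c_t+\sigma_g^2 t)^{2/3}\bigr)$, the residual is exactly absorbed by the combined term $c_\nu\alpha_t+64\tilde L^2\alpha_t/(DN)$, using the two components of $c_\nu$ in \cref{eq:parameters}. The lower-bound construction in $c_t$ guarantees that every inequality holds uniformly in $t$.

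Finally, after telescoping, the surviving upper bound has the structure
\begin{align*}
\sum_{t=1}^{T}\frac{\alpha_t}{2}\mathbb{E}\bigl[\|\nabla J(\bar\theta^{(t)})\|^2\bigr]
&\le J(\bar\theta^{(1)})-J^*+\frac{DN}{64\tilde L^2}\cdot\frac{\mathbb{E}[\|\bar\varepsilon^{(1)}\|^2]}{\alpha_0}\\
&\quad+C_1\sigma_g^2\sum_{t=1}^{T}\alpha_t^3+C_2\frac{\sigma_g^2 DK^2}{\tilde D}\cdot\frac{1}{\alpha_T},
\end{align*}
where the last term comes from the initial-batch contribution through $\tilde u_i^{(1)}$ (averaged over $\tilde D$ trajectories). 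I bound $\sum_t\alpha_t^3\le c_\alpha^3\sigma_g^{-2}(1+\ln(1+T))$ by comparing with an integral, and I use $1/\alpha_t$ monotone to conclude $\sum_t\alpha_t\ge T\alpha_T$ together with $\alpha_t\ge 1/(16\tilde L K)\wedge c_\alpha/(2\sigma_g^{2/3} t^{1/3})$. Dividing by $\sum_t\alpha_t/2$ and using the bi-rate lower bound $\sum_t\alpha_t\ge \tfrac{1}{2}\max\{T/(16\tilde L K),\,(DN)^{2/3}T^{2/3}/(2\tilde L)\}$ produces the two complementary prefactors $32\tilde L K/T$ and $2\tilde L/((DN)^{2/3}T^{2/3})$ appearing in \cref{eq:eq31}, and identical splitting yields the $\sigma_g^2$ and $\tilde D$ terms. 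Plugging $K=\mathcal{O}((T/N^2)^{\lambda/3})$, $D=\mathcal{O}((T/N^2)^{1/2-\lambda/2})$, $\tilde D=DK$ into this bound then gives Theorem~\ref{thm:convergence} directly.
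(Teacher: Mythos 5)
Your overall architecture is exactly the paper's proof: substitute \cref{lem:error_accumulation} into \cref{lem:successive_difference} to reach \cref{eq:successive_difference_2}, telescope the gradient-error term via \cref{lem:gradient_error_bound}, control the accumulated consensus error $\sum_t \alpha_t\delta_t$ blockwise via \cref{lem:accumulated_grad_consensus_error}, absorb the leftover $\alpha_t\mathbb{E}[\|\bar{u}^{(t)}\|^2]$ pieces into the negative descent term, bound $\sum_t\alpha_t^3$ logarithmically using $c_t\ge 2\sigma^2_g$, and convert $1/(\alpha_T T)$ into the two complementary rates via $(x+y)^{1/3}\le x^{1/3}+y^{1/3}$ together with $c_T\le 2^{12}K^3(DN)^2\sigma^2_g$; the initial-batch term enters through $\mathbb{E}[\|\bar{\varepsilon}^{(1)}\|^2]\le\sigma^2_g/(\tilde{D}N)$ and $c_0\le 2^{12}\tilde{L}^3K^3c^3_\alpha$, producing the $\tilde{D}$ terms. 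All of this matches the paper step for step.

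However, your verification of condition (ii) fails as written. From the first branch of the max you get only
\begin{align*}
\alpha_t \;=\; \frac{c_\alpha}{(c_t+\sigma^2_g t)^{1/3}} \;\le\; \frac{c_\alpha}{c_t^{1/3}} \;\le\; \frac{c_\alpha}{\bigl(c^3_\nu c^3_\alpha/(2^{12}K^3\tilde{L}^3)\bigr)^{1/3}} \;=\; \frac{16K\tilde{L}}{c_\nu},
\end{align*}
and since $c_\nu\le 2^7\tilde{L}^2/(DN)$, this right-hand side is at least $KDN/(8\tilde{L})$, which \emph{grows} with $DN$ — it does not imply $\alpha_t\le 1/(16\tilde{L}K)$ (that branch instead serves to keep $c_\nu\alpha^2_t\le 1$, i.e., $\nu_{t+1}\in[0,1]$). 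The correct source is the \emph{middle} branch: $c_t+\sigma^2_g t\ge 2^{12}K^3D^2N^2\sigma^2_g$ for every $t$, so with $c_\alpha=(DN\sigma_g)^{2/3}/\tilde{L}$ one gets $\alpha_t\le c_\alpha/\bigl(2^4 K(DN\sigma_g)^{2/3}\bigr)=1/(16\tilde{L}K)$ uniformly. This repair is not cosmetic: (ii) underpins the hypothesis of \cref{lem:accumulated_grad_consensus_error}, the absorption of the drift terms, and your own verification of (i), where the cube-root increment is bounded by $\tfrac{2^{2/3}}{3}\tilde{L}^3\alpha^2_t/(DN)^2$ and is dominated by $\tilde{L}^2\alpha_t/(24K(DN)^2)$ only after invoking $\alpha_t\le 1/(16\tilde{L}K)$. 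Relatedly, in (i) you have a sign slip: rearranging the hypothesis of \cref{lem:gradient_error_bound} gives $1/\alpha_t - 1/\alpha_{t-1}\le (c_\nu - 64\tilde{L}^2/(DN))\alpha_t$, so the $64\tilde{L}^2/(DN)$ component of $c_\nu$ must \emph{cancel} (not combine with) the $64\tilde{L}^2\alpha_t/(DN)$ term, leaving only the $\tilde{L}^2/(24K(DN)^2)$ component to absorb the increment — this is precisely why $c_\nu$ has two components in \cref{eq:parameters}. With these two corrections your argument coincides with the paper's proof of \cref{thm:gradient_bound}.
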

\begin{proof}
First, it can be easily verified that $\alpha_t\le1/(16{\tilde{L}} K)$. Due to $c_t\le c_{t-1}$, the following holds:
\begin{align}
    \frac{1}{\alpha_t}-\frac{1}{\alpha_{t-1}}
    &\le\frac{(c_t+\sigma^2_g t)^{1/3}}{c_\alpha}-\frac{(c_{t}+\sigma^2_g (t-1))^{1/3}}{c_\alpha}\\
    &\le\frac{\sigma^2_g}{3c_\alpha(c_{t}+\sigma^2_g (t-1))^{2/3}}\tag{from the concavity of $x^{1/3}$: $(x+y)^{1/3} - x^{1/3}\le\frac{y}{3x^{2/3}}$}\\
    &\le\frac{2^{2/3}\sigma^2_g }{3c^3_\alpha}\cdot \frac{c^2_\alpha}{(c_{t}+\sigma^2_g t)^{2/3}}\tag{from  $c_t\ge 2\sigma^2_g$}\\
    &\le\bigg(c_\nu -\frac{64{\tilde{L}}^2}{DN}\bigg)\alpha_{t},
    \label{eq:eq35}
\end{align}
where we use $\alpha_t\le1/(16\tilde{L}_gK)$ and the definitions of $\alpha_t$ and $c_\nu$.
Based on \cref{eq:eq35}, substituting \cref{eq:eq15,eq:eq32} in \cref{eq:successive_difference_2}, using $\alpha_t\le1/(16KL)$ and $D\ge1$ and summing over $t=t_q+1$ to $t_{q+1}$, we obtain
\begin{align}
    &\mathbb{E}\big[J(\bar{\theta}^{(t_{q+1}+1)})\big] + \frac{DN\mathbb{E}\big[\|\bar{\varepsilon}^{(t_{q+1}+1)}\|^2\big]}{64{\tilde{L}}^2\alpha_{t_{q+1}}}  \nonumber\\
    \le\;& \mathbb{E}\big[J(\bar{\theta}^{(t_q+1)})\big] + \frac{DN\mathbb{E}\big[\|\bar{\varepsilon}^{(t_q+1)}\|^2\big]}{64{\tilde{L}}\alpha_{t_q}}+\frac{\alpha_{t_q}}{64} \mathbb{E}\big[\|\bar{u}^{(t_q)}\|^2\big] \nonumber\\
    -\;&\sum^{t_{q+1}}_{t=t_q+1} \bigg(\frac{27\alpha_{t}}{64}-\frac{\alpha^2_t L}{2}\bigg)\mathbb{E}\big[\|\bar{u}^{(t)}\|^2\big] + \frac{3c^2_\nu\sigma^2_g}{64{\tilde{L}}^2}\sum^{t_{q+1}}_{t=t_q+1}\alpha^3_{t}\nonumber\\
    +\;&\frac{ c_\nu^2\sigma^2_g}{64D{\tilde{L}}^2}  \alpha^3_{t_q} - \sum^{t_{q+1}}_{t=t_q+1}\frac{\alpha_t}{2}\mathbb{E}\big[\|\nabla J(\bar{\theta}^{(t)}) \|^2\big]
\end{align}
Based upon $\bar{u}^{(0)}=0$, $D\ge1$ and $\alpha_t\le1/(16LK)$, suming over $q\in\{0,1,\dots,M-1\}$ yields
\begin{align}
    \mathbb{E}\big[J(\bar{\theta}^{(t_{M}+1)})\big]   &\le J(\bar{\theta}^{(1)})+\frac{DN\mathbb{E}\big[\|\bar{\varepsilon}^{(1)}\|^2\big]}{64{\tilde{L}}^2\alpha_{0}} +\frac{c^2_\nu\sigma^2_g }{16{\tilde{L}}^2}\sum^{t_{M}}_{t=0}\alpha^3_{t}\nonumber\\
    \label{eq:eq33}
    &-\sum^{t_{M}}_{t=1} \frac{\alpha_{t}}{2}\mathbb{E}\big[\|\nabla J(\bar{\theta}^{(t)}) \|^2\big]. 
\end{align}
Note that $T=MK$. Rearranging terms in \cref{eq:eq33}, we have
\begin{align}
    \frac{1}{T}\sum^{T}_{t=1} \mathbb{E}\big[\|\nabla J(\bar{\theta}^{(t)}) \|^2\big]
    &\le   \frac{2(J(\bar{\theta}^{(1)}) - J^*)}{\alpha_{T}T} + \frac{D\sigma^2_g}{32{\tilde{L}}^2\tilde{D}\alpha_{0}\alpha_{T}T} \nonumber\\
    &+\frac{c^2_\nu\sigma^2_g }{8{\tilde{L}}^2 \alpha_{T}T}\sum^{T}_{t=0}\alpha^3_{t},
    \label{eq:eq24}
\end{align}
where we use $\mathbb{E}[\|\bar{\varepsilon}^{(1)}\|^2] \le  \sigma^2_g/(\tilde{D}{N})$ (akin to \cref{eq:eq5}). For term $\sum^{T}_{t=0}\alpha^3_{t}$, due to $c_t\ge2\sigma^2_g\ge\sigma^2_g$, we obtain
\begin{align}
    \sum^{T}_{t=0}\alpha^3_{t} 
    =\frac{c^3_\alpha}{\sigma^2_g} \sum^{T}_{t=0}\frac{1}{1+ t}\le\frac{c^3_\alpha}{\sigma^2_g}\big(1+\ln\left(1+T\right)\big),
    \label{eq:eq25}
\end{align}
based on the relationship:
\begin{align}
    \sum^T_{t=1}\frac{x_t}{x_0+\sum^t_{{\tau}=1}x_{{\tau}}}\le\ln\bigg(1+\frac{\sum^t_{{\tau}=1}x_{{\tau}}}{x_0}\bigg),
\end{align}
 with $x_0=1$ and $x_1,x_2,\dots,x_T=1$. From the definition of $c_\alpha$ and $c_t$ and the fact, $c_\nu\le2^7{\tilde{L}}^2/(DN)$, it is clear that
 \begin{align}
    c_T
    &\le\sigma^2_g\max\bigg\{\frac{2^9}{DNK^3},2^{12}K^3(DN)^2 - T,2\bigg\}.
    \label{eq:eq26}
 \end{align}
 Accordingly, we have $c_T\le2^{12}K^3(DN)^2\sigma^2_g$.
 Drawing on the definition of $\alpha_t$ and $c_\alpha$, we can bound term $1/(\alpha_T T)$ by
 \begin{align}
    \frac{1}{\alpha_T T} 
    &\le \frac{c^{1/3}_T}{c_\alpha T} + \frac{\sigma^{2/3}_g}{c_\alpha T^{2/3}}\tag{from $(x+y)^{1/3}\le x^{1/3}+y^{1/3}$}\\
    &\le \frac{16K{\tilde{L}}}{T} + \frac{{\tilde{L}}}{(DN)^{2/3}T^{2/3}}.
    \label{eq:eq27}
 \end{align}
 Regarding the second term in \cref{eq:eq24}, we can write
 \begin{align}
    \frac{D\sigma^2_g}{32{\tilde{L}}^2\tilde{D}\alpha_{0}\alpha_T T} &\le \bigg( \frac{16{\tilde{L}}K}{T} + \frac{{\tilde{L}}}{(DN)^{2/3}T^{2/3}}\bigg)	\frac{\sigma^2_gc^{1/3}_0 D}{32{\tilde{L}}^2 \tilde{D}c_\alpha}\tag{from \cref{eq:eq27} and the definition of $\alpha_{0}$}\\
    &\le \bigg( \frac{16{\tilde{L}}K}{T} + \frac{{\tilde{L}}}{(DN)^{2/3}T^{2/3}}\bigg)	\frac{16\sigma^2_gK\tilde{L}D}{32{\tilde{L}}^2\tilde{D}}\tag{from $c_0\le 2^{12}\tilde{L}^3K^3c^3_\alpha$}\\
    &\le\frac{8DK^2\sigma^2_g}{\tilde{D}T} + \frac{DK\sigma^2_g}{2\tilde{D}(DN)^{2/3}T^{2/3}}.
    \label{eq:eq29}
 \end{align}
 Regarding the third term in \cref{eq:eq24}, we have
 \begin{align}
    \frac{c^2_\nu c^3_\alpha}{8\tilde{L}^2\alpha_T T}&\le\bigg( \frac{16\tilde{L}K}{T} + \frac{\tilde{L}}{(DN)^{2/3}T^{2/3}}\bigg)\bigg(\frac{2^7\tilde{L}^2}{DN}\bigg)^2\frac{(DN)^2\sigma^2_g}{8\tilde{L}^5}\tag{from $c_\nu\le\frac{2^7\tilde{L}^2}{DN}$ and $c_\alpha=(DN)^{2/3}\sigma^{2/3}_g/\tilde{L}$}\\
    &\le\frac{2^{15} K\sigma^2_g}{T} + \frac{2^{11}\sigma^2_g}{(DN)^{2/3}T^{2/3}}.
    \label{eq:eq30}
 \end{align}
 Finally, substituting the bounds in \cref{eq:eq25,eq:eq27,eq:eq29,eq:eq30} in \cref{eq:eq24}, we complete the proof.
\end{proof}




\section{Experiment}
In this section, we empirically evaluate the proposed method and answer the following key questions:
\begin{itemize}
    \item [1)] How does \alg{} perform on standard benchmarks in comparison to the existing FRL algorithms?
    \item [2)] What is the communication/interaction cost of \alg{}?
    \item [3)] How does \alg{} speed up with the number of agents?
    \item [4)] What is the impact of interaction frequency?
\end{itemize}

\subsection{Experimental Setup}

\subsubsection{Environments}
The experiments are carried out on six challenging \texttt{gym} environments~\cite{openai2016gym}, including Classic Control tasks (i.e., Cartpole and Pendulum), continuous control MuJoCo tasks (i.e., Halfcheetah and Hopper), and image-based Atari games (i.e., Pong and Breakout), as shown in \cref{fig:environment}.

\begin{figure}[h]
    \vspace{-0.5em}
    \centering  \includegraphics[width=0.995\linewidth]{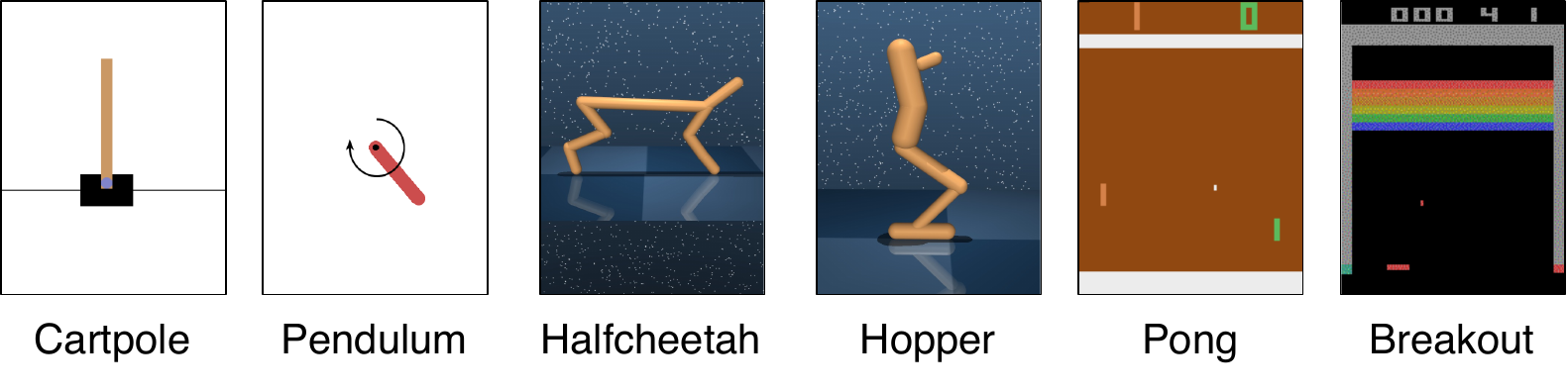}
    \vspace{-2em}
    \caption{Benchmark environments.}
    \vspace{-0.7em}
    \label{fig:environment}
\end{figure}

\subsubsection{Baselines}
We compare our proposed algorithm with the following three baseline methods:
\begin{itemize}
\item  \emph{Federated Policy Gradient with the
Byzantine Resilience} (\texttt{FedPG-BR})~\cite{fan2021fault}, a recent fault-tolerant FRL algorithm;

\item  \emph{Federated Double Q-learning} (\texttt{Fed-DQN})~\cite{khodadadian2022federated}, an FRL algorithm that combines \texttt{FedAvg} with \texttt{DQN};

\item  \emph{Federated Soft Actor-Critic} (\texttt{Fed-SAC}), an FRL algorithm combining \texttt{FedAvg} with \texttt{SAC}~\cite{haarnoja2018soft}.
\end{itemize}

\subsubsection{Implementation}
The policy is represented as a 2-layer feedforward neural network with 16 hidden units for Classic Control and 256 for the other tasks. It uses ReLU activation functions and Tanh Gaussian outputs. Guided by the analytical results, we set the momentum parameter as $\nu^{(t)}=1-3\alpha^{(t)}$ and the stepsize as $\alpha^{(t)}=10^{-4}\times 0.99^{-t}$, they are both decrease with updating step $t$. The discount factor is set to $\gamma=0.99$. In each round, the number of local updating steps is set as $K=10$ and $K=20$ for Classic Control domains and other domains, respectively. We sample $D=20$ trajectories in each updating step. In addition, we implement the code using Pytorch 1.8.1 framework and run the experiments on Ubuntu 18.04.2 LTS with $8$ NVIDIA GeForce RTX A6000 GPUs.

\subsection{Experimental Result}

\begin{figure*}[tb]
    \centering
    \vspace{-0.3em}
    \subfigure[Results on Cartpole.]{\includegraphics[width=0.49\columnwidth]{./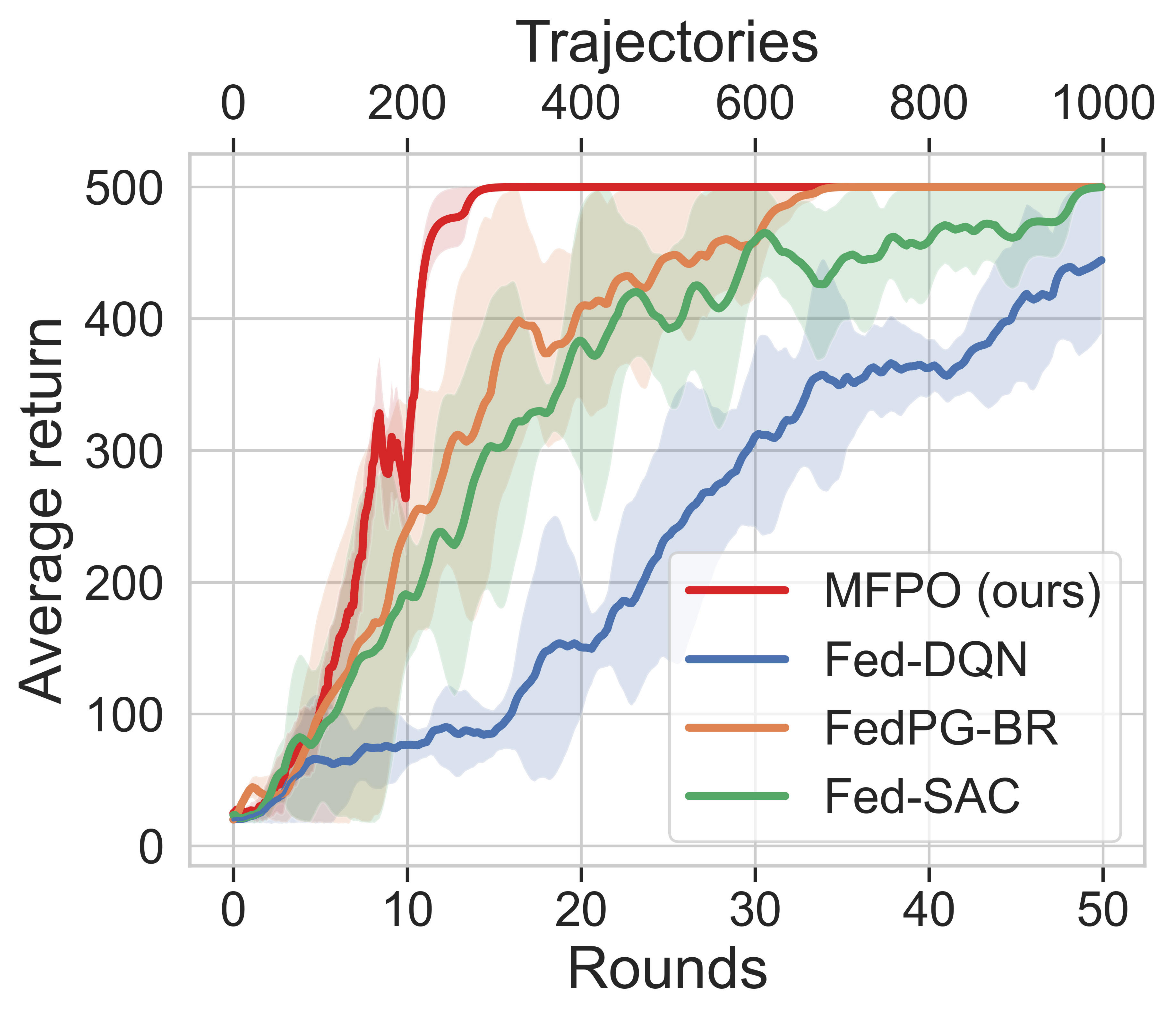}}
    \subfigure[Results on Pendulum.]{\includegraphics[width=0.49\columnwidth]{./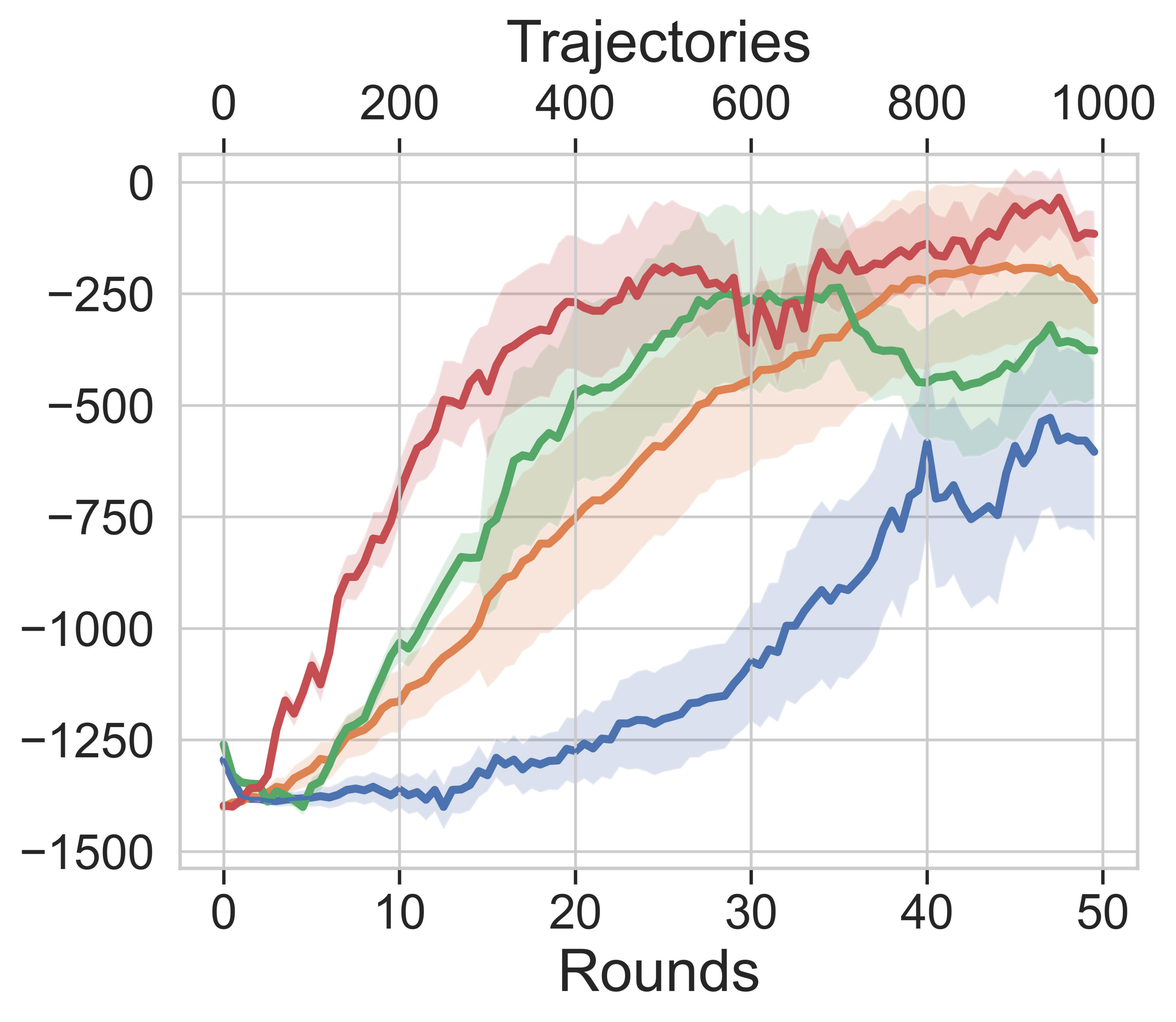}}
    \subfigure[Results on Halfcheetah.]{\includegraphics[width=0.49\columnwidth]{./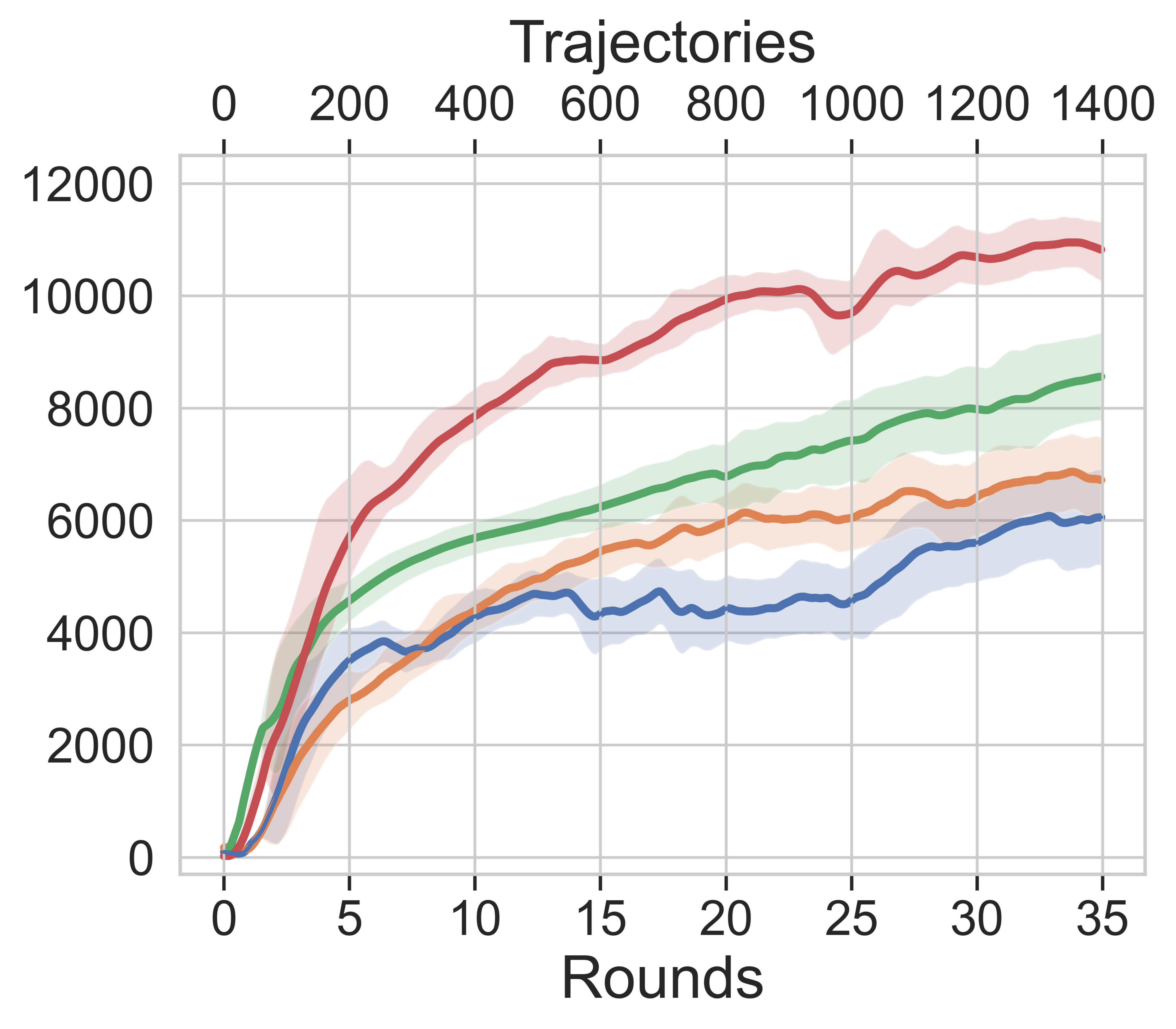}}
    \subfigure[Results on Hopper.]{\includegraphics[width=0.49\columnwidth]{./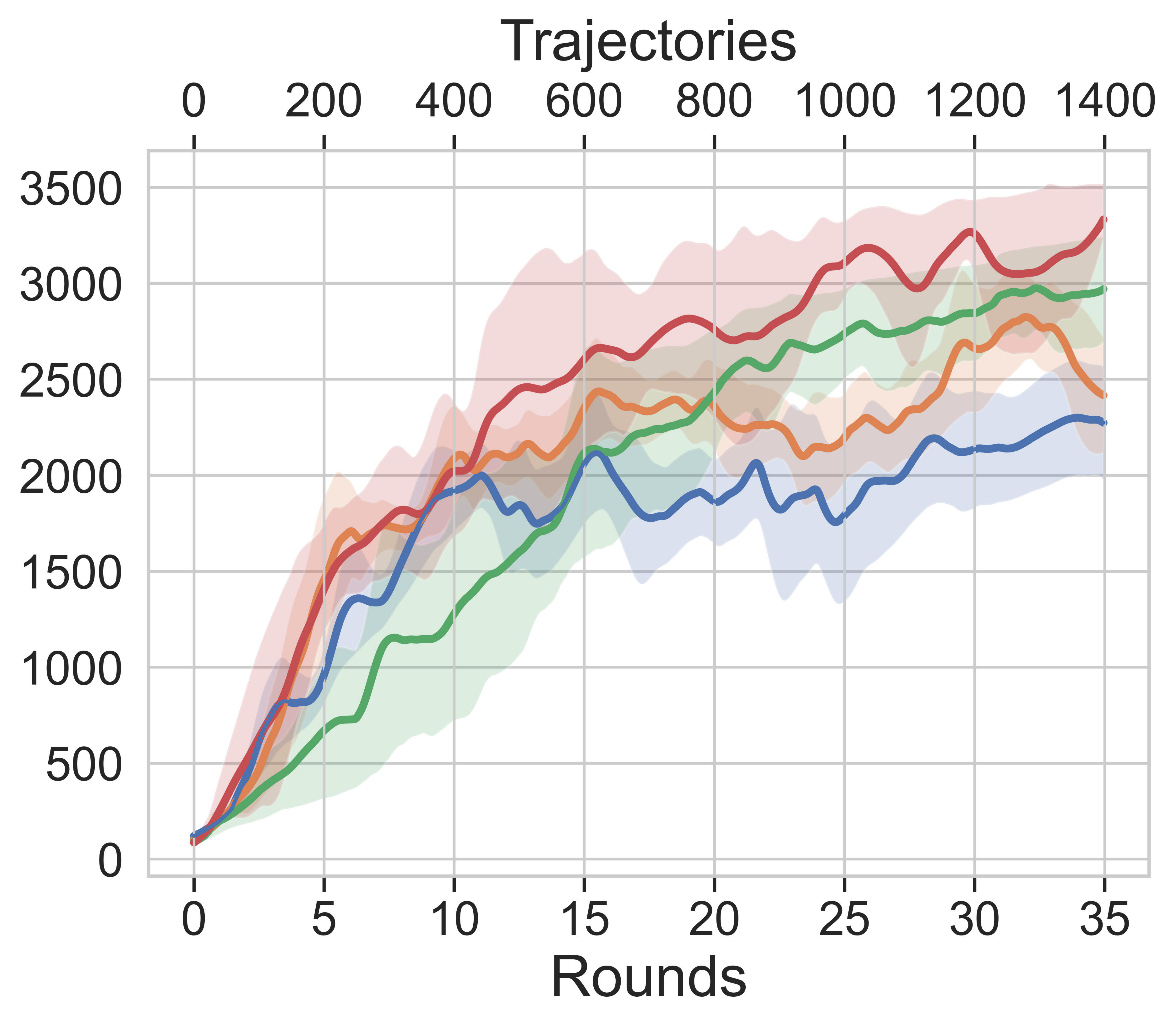}}
    \vspace{-0.5em}
    \caption{Convergence results on Classic Control and Mujoco domains.}
    \label{fig:performance_control}
\end{figure*}

\begin{figure*}[!t]
  \centering
  \vspace{-0.5cm}
\begin{minipage}[c]{0.48\textwidth}
\subfigure[Results on Pong.]{\includegraphics[width=0.49\columnwidth]{./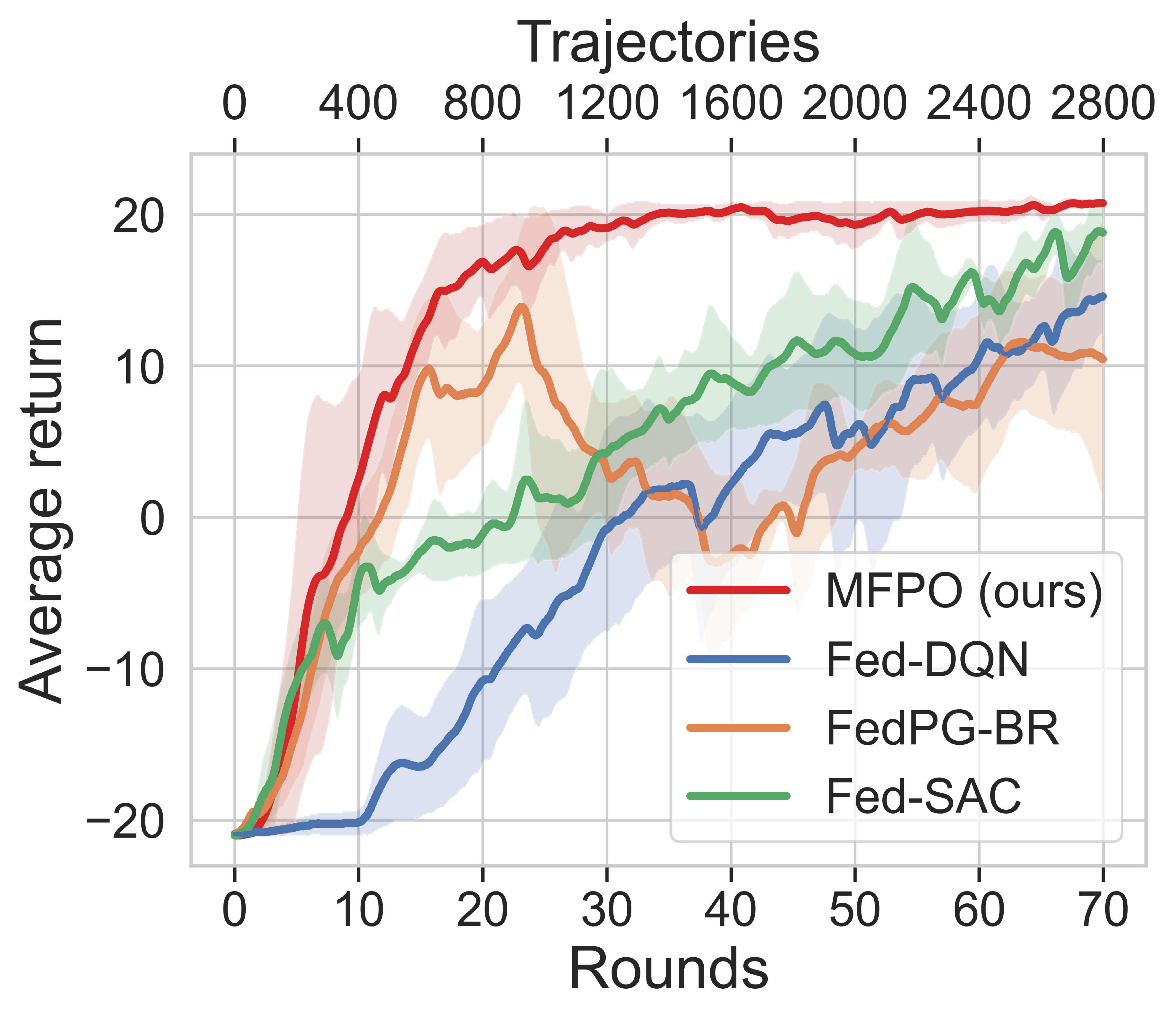}}
    \subfigure[Results on Breakout.]{\includegraphics[width=0.49\columnwidth]{./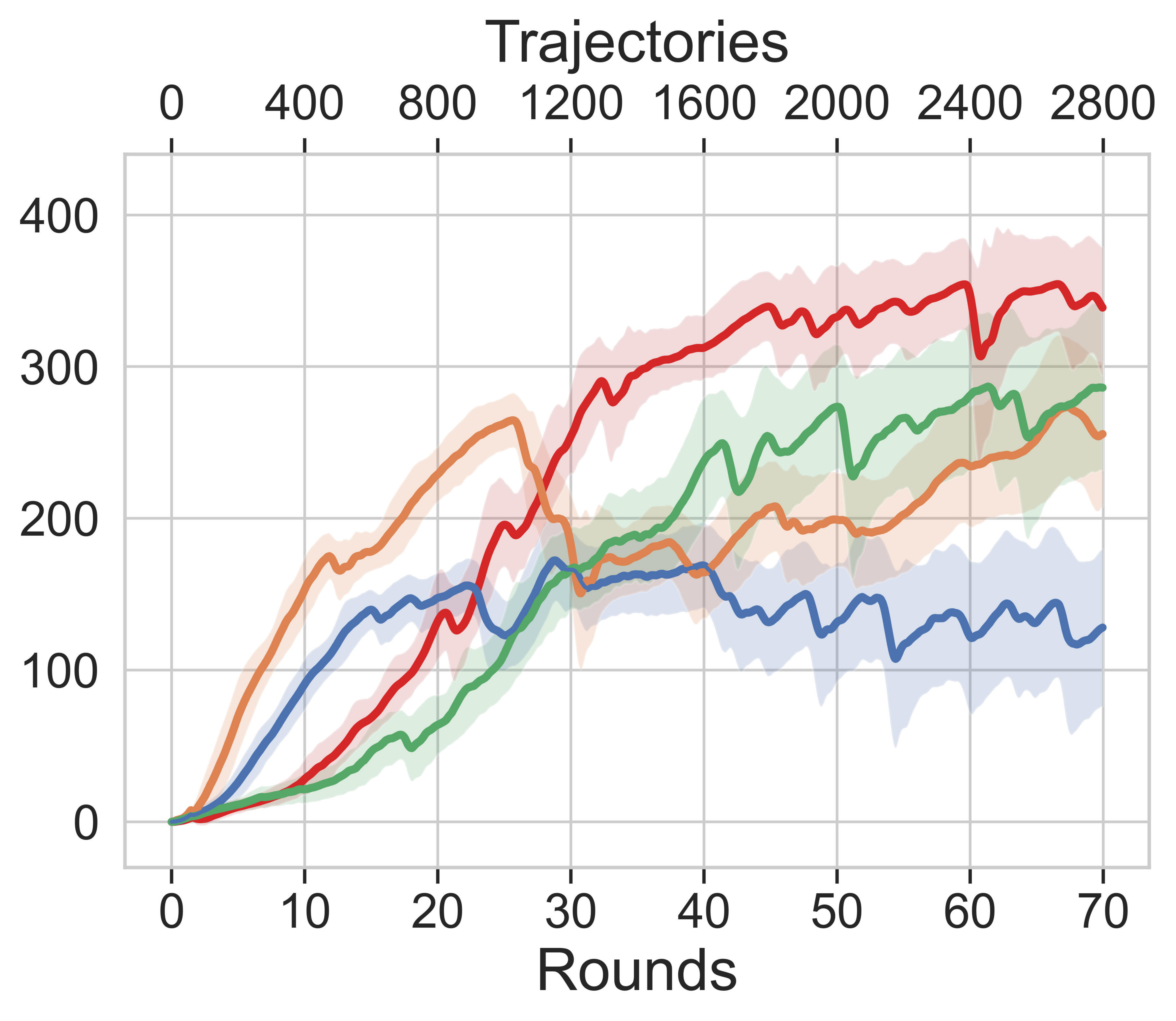}}
    \vspace{-0.5em}
    \caption{Convergence results on Atari games.}
    \label{fig:performance_atari}
  \end{minipage}
  \hspace{0.05cm} 
  \begin{minipage}[c]{0.48\textwidth}
  \subfigure[Results on Cartpole.]{\includegraphics[width=0.49\columnwidth]{./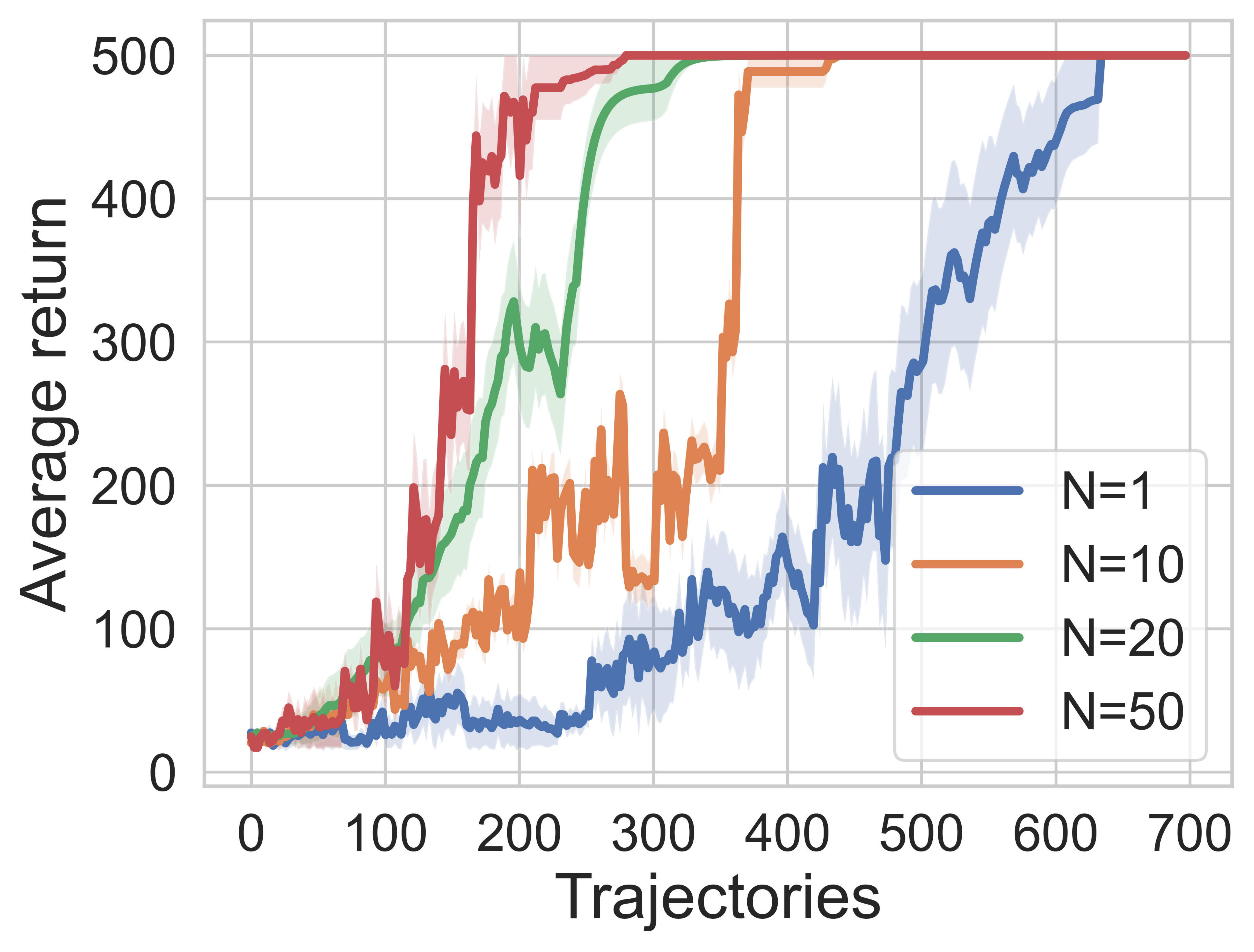}}
    \subfigure[Results on Halfcheetah.]{\includegraphics[width=0.49\columnwidth]{./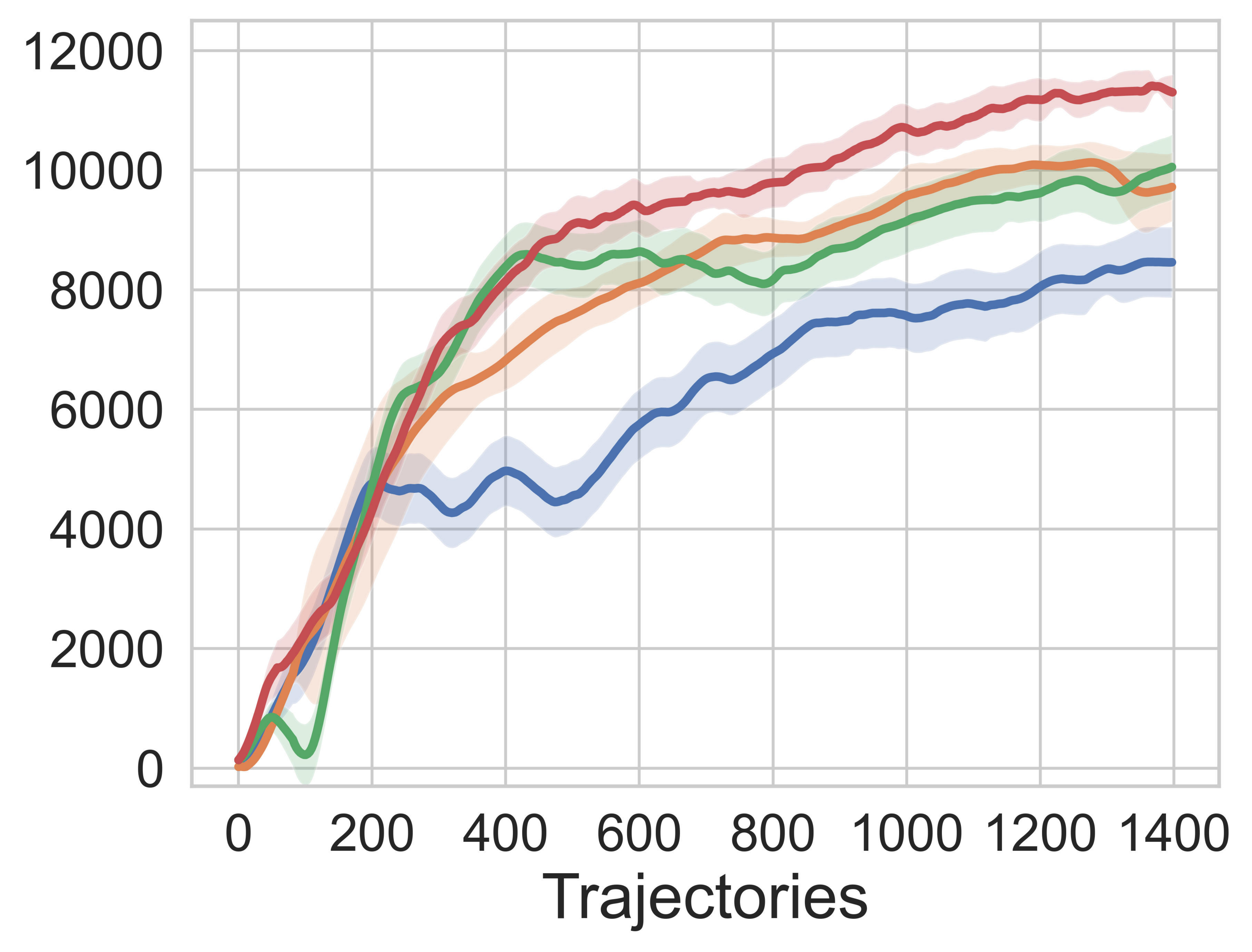}}
    \vspace{-0.5em}
    \caption{Performance uner different number of agents.}
    \label{fig:agent_number}
  \end{minipage} 
  \end{figure*}

  \begin{figure*}[!t]
  \centering
 \vspace{-0.5cm}
  \begin{minipage}[c]{0.48\textwidth}
  \subfigure[Results on Cartpole.]{\includegraphics[width=0.49\columnwidth]{./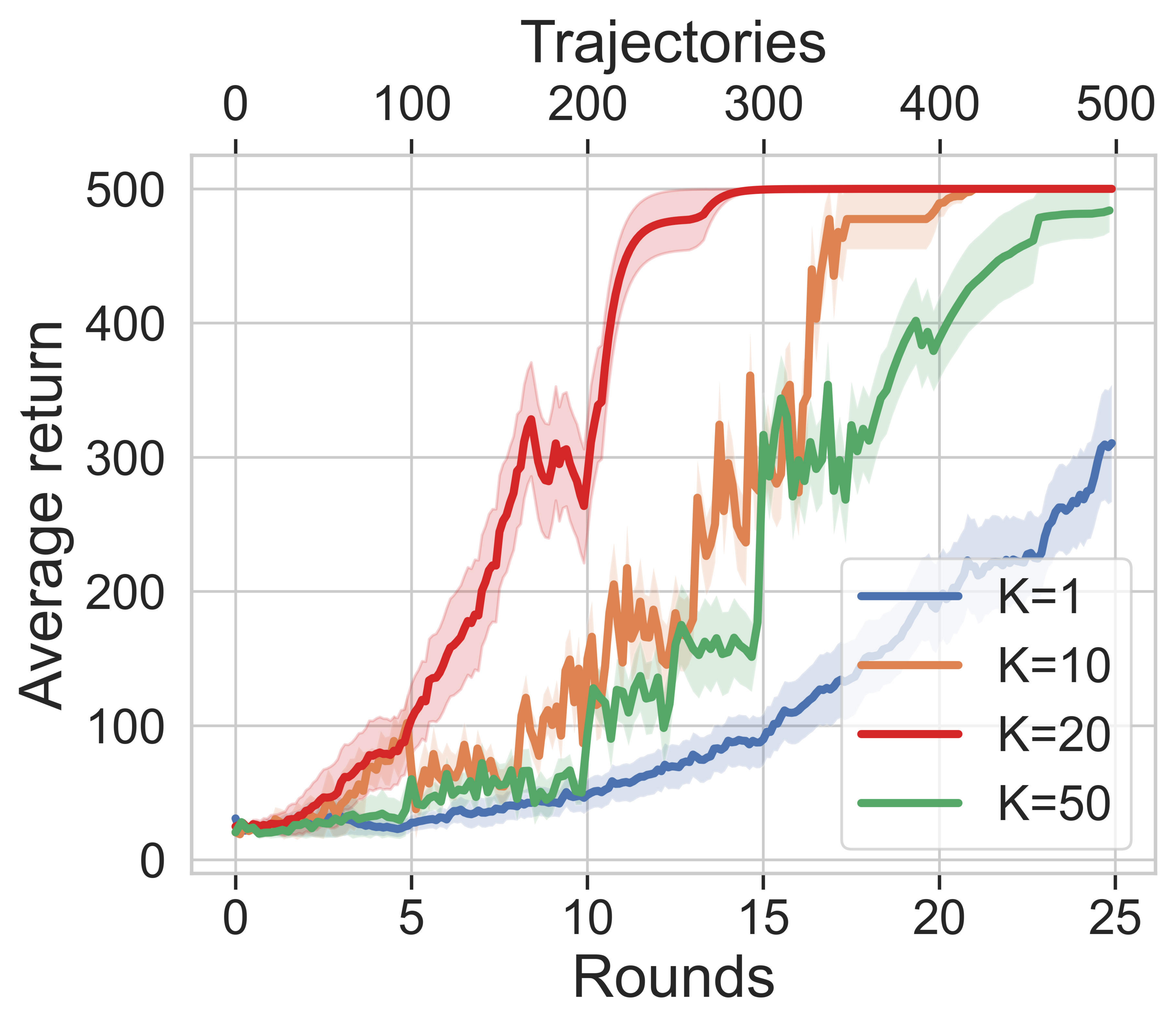}}
    \subfigure[Results on Halfcheetah.]{\includegraphics[width=0.49\columnwidth]{./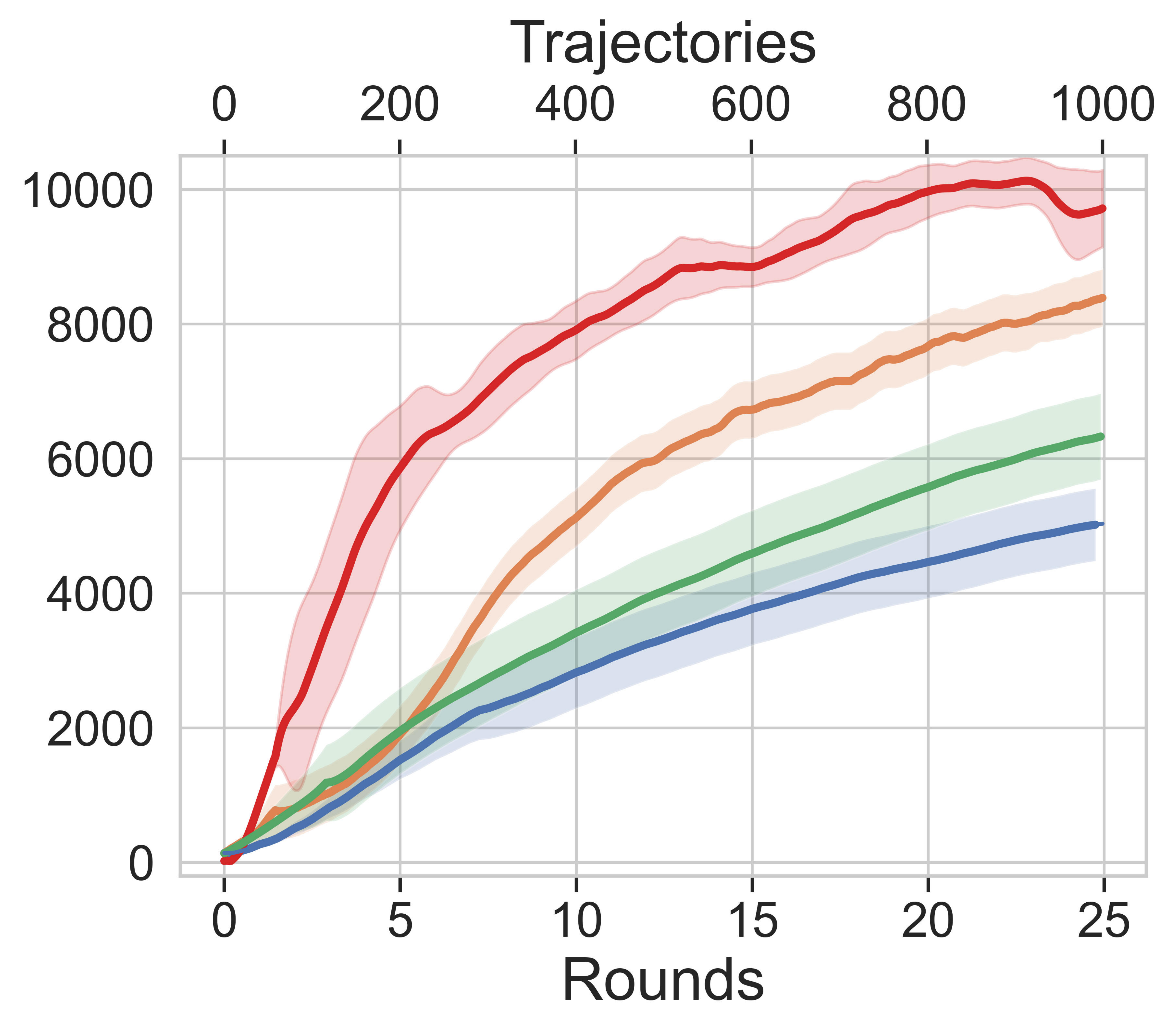}}
    \vspace{-0.5em}
    \caption{Impact of local steps on performance.}
    \label{fig:local_steps}
  \end{minipage} 
  \hspace{0.05cm} 
  \begin{minipage}[c]{0.48\textwidth}
\subfigure[Results on Cartpole.]{\includegraphics[width=0.49\columnwidth]{./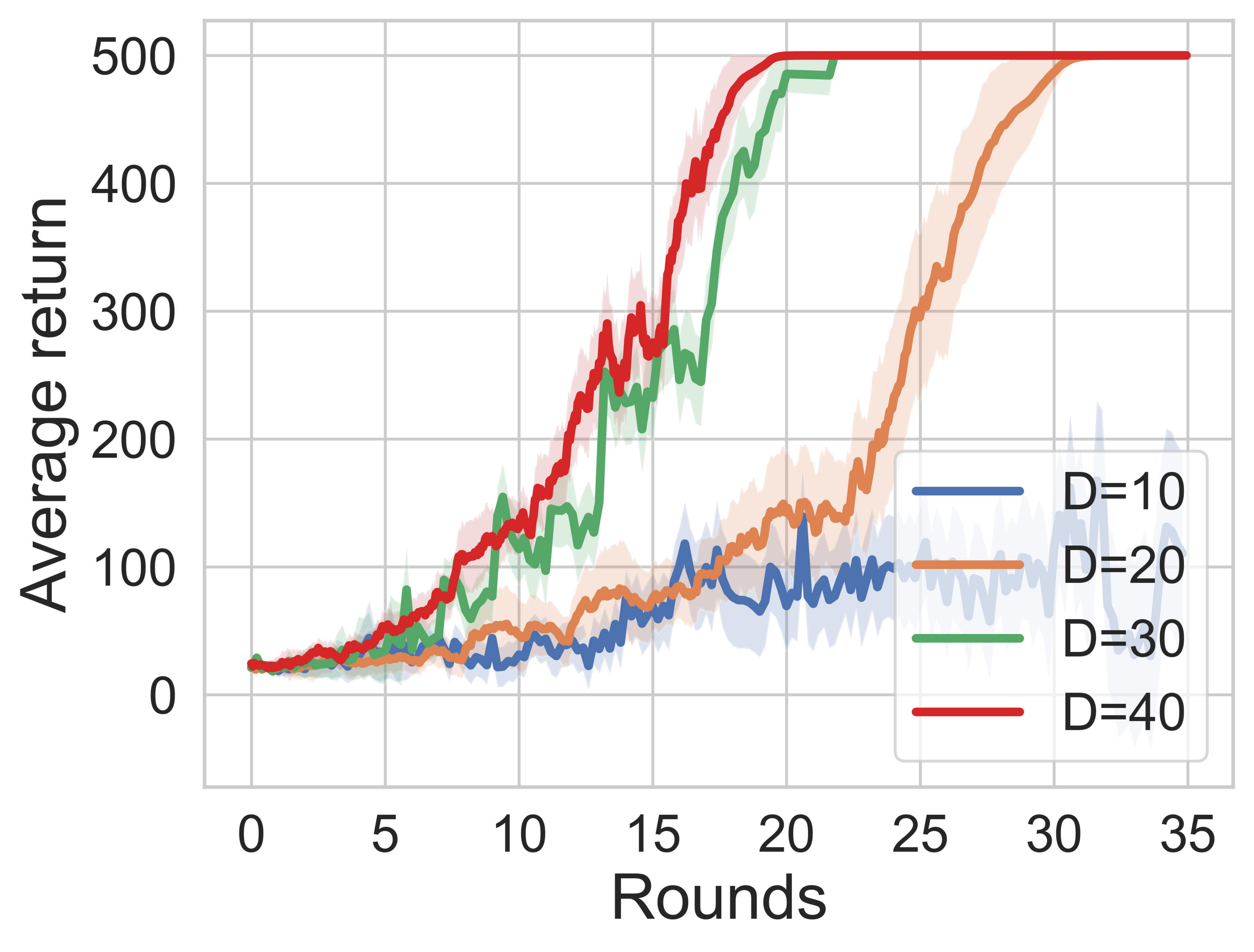}}
    \subfigure[Results on Halfcheetah.]{\includegraphics[width=0.49\columnwidth]{./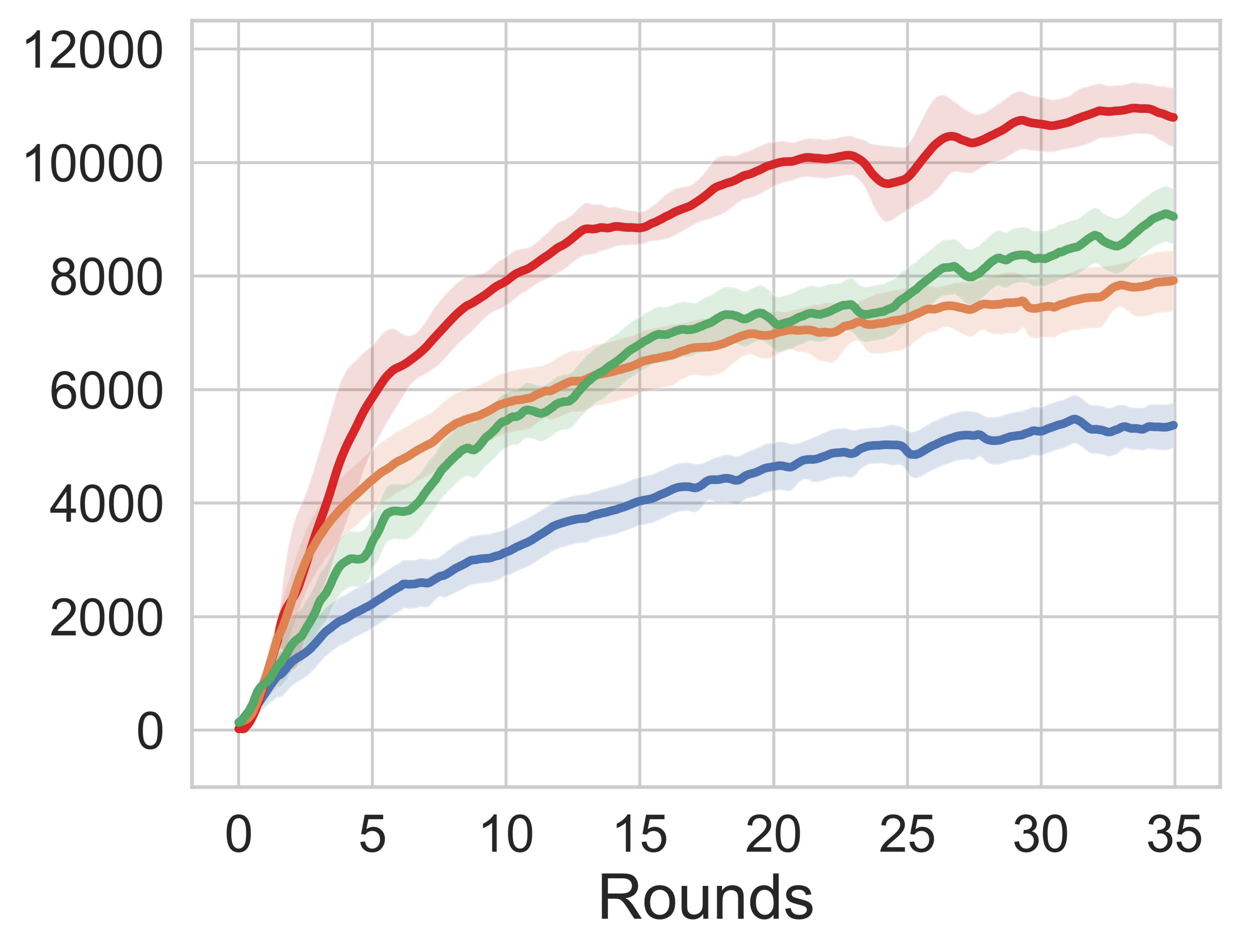}}
    \vspace{-0.5em}
    \caption{Performance under different batch sizes.}
    \label{fig:batch_sizes}
  \end{minipage}
    \vspace{-0.5cm}
  \end{figure*}

\subsubsection{Comparative results}
To answer the first and second questions arised above, we provide comparison results of proposed \alg{} with three baselines. 
As shown in \cref{tab:performance}, \alg{} yields the best performance with a wide margin among all tasks. \cref{fig:performance_control,fig:performance_atari} show that \alg{} achieves higher returns while incurring relatively low communication and interaction costs (often by less than 30 rounds and 1000 trajectories), especially in complex and high-dimensional environments. In contrast to the fluctuating performance observed in the baselines, \alg{} holds remarkable stability. This demonstrates the efficacy of the momentum-assisted mechanism introduced in controlling policy gradient variates.

\begin{table}[htpb]
\centering
\vspace{-0.25em}
\renewcommand\arraystretch{1.25}
\caption{Average scores on different environments.}
\vspace{-0.5em}
\resizebox{\columnwidth}{!}{
\begin{tabular}{@{}lrrrr@{}}
\toprule
~~\textbf{Environment}       & \textbf{\texttt{Fed-DQN}} & \textbf{\texttt{FedPG-BR}}     & \textbf{\texttt{Fed-SAC}}      & \textbf{\alg{} (ours)}~~    \\ \midrule
~~Cartpole    & $234.3\pm92.5$    & $446.7\pm53.3$   & $393.4\pm55.5$   & $\bm{500.0\pm0.0}$~~         \\ 
~~Pendulum     & $-604.1\pm200.4$  & $-264.2\pm86.8$  & $-377.9\pm107.1$ & $\bm{-115.5\pm51.4}$~~   \\ 
~~Halfcheetah & $6055.6\pm843.1$  & $6719.4\pm753.8$ & $8561.5\pm775.3$ & $\bm{10794.8\pm520.2}$~~ \\ 
~~Hopper       & $1854.6\pm294.9$  & $1997.7\pm297.7$ & $2446.1\pm376.5$ & $\bm{3030.9\pm68.8}$~~  \\ 
~~Pong         & $14.6\pm2.5$      & $10.5\pm9.2$     & $18.8\pm2.2$     & $\bm{20.8\pm0.2}$~~      \\
~~Breakout     & $128.1\pm51.6$    & $255.6\pm47.8$   & $286.2\pm54.3$   & $\bm{336.5\pm42.1}$~~  \\ \bottomrule
\end{tabular}
}
\label{tab:performance}
\vspace{-1.8em}
\end{table}




\subsubsection{Linear speedup}
To answer the third question, we conduct experiments by varying the number of agents from 1 to 50.  
\cref{fig:agent_number} reveals a significant improvement in performance as the number of participating agents increases, which aligns well with our theoretical findings. That is, \alg{} adeptly controls the inter-agent gradient shift, thereby preventing variance accumulation even with an increasing number of agents involved.



\subsubsection{Impact of local steps}


To validate the impact of the number of local updates, denoted as $K$, we conduct experiments by varying its value from 1 to 50. The results, displayed in \cref{fig:local_steps}, show that the performance initially improves with an increasing value of $K$ and then starts to decline, consistent with our theoretical results (referring to the last term in \cref{eq:eq31}). The reason behind this trend is that a larger value of $K$ exacerbates the gradient shifts across different agents.

\subsubsection{Impact of batch sizes}
\cref{fig:batch_sizes} shows the impact of the number of collected trajectories in each updating step. 
As expected, when using a small value of $D$, the performance degrades dramatically, primarily because of the substantially high variance in the gradient estimator.

\section{Conclusion}

This paper introduces a new momentum-assisted federated policy optimization algorithm, namely \alg{}, to cope with the spatio-temporal non-stationarity of data distributions in FRL. We theoretically show that \alg{} offers the best communication and interaction complexities over the existing FRL methods, and provide extensive experiments to corroborate its superior performance over the baselines in continuous and high-dimensional environments. In future work, we will investigate offline/batch FRL approaches that can extract policies only from distributed \emph{static} data with no need to interact with environments. The authors have provided public access to their code at \href{https://codeocean.com/capsule/1418921/tree/v1}{https://codeocean.com/capsule/1418921/tree/v1}.



\section*{Acknowledgments}
This research was supported in part by NSFC under Grant No. 62341201, 62122095, 62072472, 62172445, 62302260, and 62202256, by the National Key R\&D Program of China under Grant No. 2022YFF0604502, by CPSF Grant 2023M731956, and by a grant from the Guoqiang Institute, Tsinghua University.

\newpage

\bibliographystyle{IEEEtran}
\bibliography{reference}

\end{document}